\newtheorem{theorem}{Theorem}[section]
\newtheorem{corollary}{Corollary}[theorem]
\newtheorem{lemma}[theorem]{Lemma}
\newtheorem{proposition}[theorem]{Proposition}
\newtheorem{assumption}{Assumption}[section]
\newtheorem{definition}{Definition}[section]
\DeclareMathOperator*{\argmin}{arg\,min}
\newcommand{\Dc}{\mathcal{D}}
\newcommand{\Sc}{\mathcal{S}}
\newcommand{\E}{\mathbb{E}}
\title{Explicit Eigenvalue Regularization Improves Sharpness-Aware Minimization}
\begin{document}

\maketitle

\vspace{-4em}
\begin{center}
    \textbf{Haocheng Luo$^{1}$, Tuan Truong$^{2}$, Tung Pham$^{2}$, Mehrtash Harandi$^{1}$, Dinh Phung$^{1}$, Trung Le$^{1}$} \\[2ex]
    $^{1}$Monash University, Australia \\
    $^{2}$VinAI Research, Vietnam \\[2ex]
    \texttt{haocheng.luo@monash.edu, v.tuantm27@vinai.io, v.tungph4@vinai.io,}\\
    \texttt{mehrtash.harandi@monash.edu, dinh.phung@monash.edu, trunglm@monash.edu}
\end{center}
\vspace{3em}
\begin{abstract}
Sharpness-Aware Minimization (SAM) has attracted significant attention for its effectiveness in improving generalization across various tasks. However, its underlying principles remain poorly understood. In this work, we analyze SAM’s training dynamics using the maximum eigenvalue of the Hessian as a measure of sharpness and propose a third-order stochastic differential equation (SDE), which reveals that the dynamics are driven by a complex mixture of second- and third-order terms. We show that alignment between the perturbation vector and the top eigenvector is crucial for SAM’s effectiveness in regularizing sharpness, but find that this alignment is often inadequate in practice, which limits SAM's efficiency. Building on these insights, we introduce Eigen-SAM, an algorithm that explicitly aims to regularize the top Hessian eigenvalue by aligning the perturbation vector with the leading eigenvector. We validate the effectiveness of our theory and the practical advantages of our proposed approach through comprehensive experiments. Code is available at https://github.com/RitianLuo/EigenSAM.
\end{abstract}

\section{Introduction}
Understanding the generalization of deep learning algorithms is one of the core challenges in modern machine learning. Overparameterization makes the loss landscape of neural networks highly non-convex, often featuring numerous global optima, while simple gradient-based algorithms surprisingly tend to find solutions that generalize well. A body of empirical and theoretical work suggests that the "flatness" or "sharpness" of the minima is a promising explanation for generalization \citep{hochreiter1997flat,keskar2016large,dinh2017sharp,jiang2019fantastic,xie2020diffusion,liu2023same}, and the implicit bias of optimization algorithms drives them toward flatter solutions, thereby ensuring good generalization \citep{blanc2020implicit,Wen2022,arora2022understanding,damian2022self,ahn2023escape,tahmasebi2024universal}.

Inspired by research on flatness and generalization, recent work by \cite{Foret2021} proposed Sharpness-Aware Minimization (SAM), a dual optimization method that perturbs parameters before performing gradient descent to enhance generalization performance by minimizing sharpness. Although SAM has demonstrated empirical success across various fields \citep{Foret2021,kaddour2022flat}, theoretical analysis of the principles underlying its success remains limited. The work by \cite{compagnoni2023sde} explains SAM's generalization advantage as an implicit minimization of the gradient norm, while \cite{Wen2022} suggests that SAM regularizes the Hessian spectrum near the minima manifold. However, existing theories are either somewhat simplified or rely on overly idealized assumptions, leading to a noticeable gap between theory and practice. This gap limits their ability to fully explain the advantages of SAM (see Appendix \ref{empirical} or contemporaneous work \cite{song2024doessgdreallyhappen} for empirical evidence).

In this paper, we consider a widely used measure of sharpness: the largest eigenvalue of the Hessian matrix \citep{lyu2022understanding,arora2022understanding,Wen2022,damian2022self}. We extend the previous PAC-Bayes theory to demonstrate the importance of this measure for generalization. Our main contribution is an in-depth analysis of the training dynamics of SAM, expanding on the second-order Stochastic Differential Equation (SDE) proposed by \cite{compagnoni2023sde} and revealing that the complex third-order terms play a crucial role in shaping SAM's implicit bias. Under ideal conditions, where the perturbation vector aligns well with the top eigenvector of the Hessian matrix, these terms effectively reduce the sharpness of the loss function. However, our experiments show that this alignment does not hold in real-world settings, limiting SAM's ability to effectively regularize sharpness. Based on our theoretical findings and experimental observations, we propose a new algorithm, Eigen-SAM, which intermittently estimates the top eigenvector of the Hessian matrix and incorporates its component orthogonal to the gradient into the perturbation, enabling explicit regularization of the top Hessian eigenvalue.

We summarize our contributions as follows:
\begin{itemize}
    \item We prove a new theorem (Theorem \ref{theorem:pac-bayes}) to establish the relationship between the top eigenvalue and generalization error, building on the general PAC-Bayes theorem \citep{PAC_Bayes}.
    \item We propose a third-order SDE (Theorem \ref{Theorem:3.1}) to model the dynamics of SAM. This approach achieves a lower approximation error compared to the previous second-order SDE by \cite{compagnoni2023sde} and additionally infers a close relationship between perturbation-eigenvector alignment and sharpness reduction (Corollary \ref{Corollary:3}).
    \item We introduce a novel algorithm, Eigen-SAM, based on our theoretical insights and experimental observations (Section \ref{section:5}). This method aims to enhance alignment between the perturbation vector and the top eigenvector, resulting in a more effective reduction of sharpness.
    \item We validate our theory and the effectiveness of the proposed algorithm through comprehensive experiments (Section \ref{section:6}).
\end{itemize}

\section{Related work}
\textbf{Theoretical understanding of SAM.} SAM has garnered widespread attention for its significant improvements in generalization performance; however, the theoretical analysis provided in the original paper \citep{Foret2021} is limited. The authors only presented a PAC-Bayes generalization bound, which is effective only for 0-1 loss. Subsequently, \cite{andriushchenko2022towards} attempted to further understand the success of SAM by restricting the network structure to diagonal linear networks and providing an implicit bias for SAM. They also established the first convergence result for SAM. \cite{bartlett2023dynamics} conducted a detailed study of SAM’s dynamics for quadratic loss, suggesting that it oscillates between the two sides of the minimum in the direction of greatest curvature and drifts toward flatter minima. However, the assumption of quadratic or locally quadratic loss settings is not realistic for practical deep learning models. More recently, \cite{Wen2022} extended the analysis of SAM’s dynamics to general loss functions, assuming that all global minima form a connected manifold. Given sufficient training time and infinitesimally small \(\eta\) and \(\rho\), they rigorously proved that SAM’s dynamics would track the trajectory of a Riemannian flow with respect to sharpness, achieving the same sharpness-reducing effect. On a different front, \cite{compagnoni2023sde} applied the continuous-time approximation framework from \cite{li2017stochastic} to analyze SAM dynamics, concluding that SAM implicitly minimizes the norm of the gradient scaled by \(\rho\).

\textbf{Continuous-time approximations for discrete algorithms.} Substantial research demonstrates that the trajectory of stochastic discrete iterations with decaying step sizes will ultimately follow the solution of certain Ordinary Differential Equations (ODEs) \citep{harold1997stochastic,borkar2009new,duchi2018stochastic}. Further developments in understanding deep learning algorithms were made by \cite{li2017stochastic}, who proposed a general and rigorous mathematical framework for continuous-time approximations, deriving SDEs for SGD and its various variants. They also provided experimental evidence supporting the reasonableness of continuous-time approximations in real-world models \citep{li2021validity}. In this paper, we follow this mathematical framework, reusing some of its notations and definitions.

\section{Preliminaries}
\subsection{Notations}
We start by introducing the notation used throughout our paper. Let $\mathcal{S}$ denote the training set, sampled from the true data distribution $\mathcal{D}$. For a given mini-batch \(\gamma\), we define the mini-batch loss as \(f_\gamma(x)\) with parameters \(x \in \mathbb{R}^d\). The generalization loss is defined as $f_{\mathcal{D}}\left(x\right) = \mathbb{E}_{\gamma\sim\mathcal{D}}\left[f_{\gamma}(x)\right]$, while the empirical loss is defined as $f_{\mathcal{S}}\left(x\right) = \mathbb{E}_{\gamma\sim\mathcal{S}}\left[f_{\gamma}(x)\right]$. Since we primarily analyze and discuss the empirical loss in this paper, we drop the dependency on \(\mathcal{S}\) for simplicity, denoting the empirical loss as \(f(x)\) when no ambiguity arises.

We use \(\|\cdot\|\) to denote the Euclidean norm. The $k$-th order derivative of the loss $f$ at $x$ is denoted by $\nabla^k f(x)$, which is a symmetric $k$-tensor in $(\mathbb{R}^d)^{\otimes k}$ when $x \in \mathbb{R}^d$. We denote by \(\lambda_1(\nabla^2 f(x))\) the largest eigenvalue of the Hessian matrix \(\nabla^2 f(x)\) and \(v_1(\nabla^2 f(x))\) its corresponding unit eigenvector, with \(\|v_1(\nabla^2 f(x))\| = 1\). Additionally, we use \(\nabla^3 f(x)(u, v)\in \mathbb{R}^d\) to represent the application of the symmetric 3-tensor \(\nabla^3 f(x)\) along directions \(u\) and \(v\).

\subsection{Sharpness-Aware Minimization}\label{sam}
SAM \citep{Foret2021} seeks flat minima by minimizing the perturbed loss:
\[\min_x \max_{\|\epsilon\|\leq 1} f(x+\rho \epsilon),\]
where \(\rho\) is a predefined hyperparameter controlling the radius of the perturbation. Solving the inner maximization problem leads to \(\epsilon^{SAM}(x)=\frac{\nabla f(x)}{\|\nabla f(x)\|}\). Differentiating the perturbed loss with respect to \(x\), we get:
\begin{align*}
    \nabla f(x+\rho \epsilon^{SAM}(x))&=\frac{d(x+\rho \epsilon^{SAM}(x))}{dx} \nabla f(x)|_{x+\rho \epsilon^{SAM}(x)}\\
    &\approx \nabla f(x)|_{x+\rho \epsilon^{SAM}(x)}.
\end{align*}
In the last approximation, \cite{Foret2021} ignore the dependency of \(\epsilon^{SAM}(x)\) on \(x\), leading to faster computational efficiency and higher generalization performance. Applying SAM in the stochastic case, the SAM iteration for mini-batch \(\gamma_k\) is summarized as:
\begin{align} \label{eq:sam}
    x_{k+1}=x_{k}-\eta \nabla f_{\gamma_k}\Big(x_k+\rho \frac{\nabla f_{\gamma_k}}{\|\nabla f_{\gamma_k}\|}\Big).
\end{align}
We use \(\nabla f_{\gamma_k}(\cdot)\) and \(\nabla f_{\gamma_k}\) to distinguish between those needing and not needing backpropagation of gradients, matching the algorithm in practice. Thus, the perturbation vector of mini-batch SAM can be written as:
\begin{equation}\label{epsilon}
\epsilon_{\gamma}^{SAM}=\frac{\nabla f_{\gamma}}{\|\nabla f_{\gamma}\|}.
\end{equation}

\subsection{SDE approximation for SGD and SAM}
\cite{li2017stochastic} developed the following SDE to approximate discrete SGD:
\[dX_t=-\nabla f(X_t)dt+\sqrt{\eta} (\Sigma^{1,1}(X_t))^{\frac{1}{2}}dW_t,\]
where \(W_t\) is standard Brownian motion. \cite{compagnoni2023sde} applied this framework to analyze the dynamics of SAM, deriving the following second-order SDE for SAM:
\begin{align}\label{2-order SDE}
    dX_t= \bigg(-\nabla f(X_t)- \rho \mathbb{E}\bigg[\frac{ \nabla^2 f_\gamma(X_t) \nabla f_\gamma}{\left\|\nabla f_\gamma\right\|_2}\bigg]\bigg)dt+\sqrt{\eta} \Big(\Sigma^{1,1}(X_t)+\rho(\Sigma^{1,2}(X_t)+\Sigma^{1,2}(X_t)^{\top})\Big)^{\frac{1}{2}}dW_t,
\end{align}
where \(\Sigma^{a,b}\) denotes the covariance matrix of the \(a\)-th and \(b\)-th terms in the Taylor expansion of the perturbed loss (see Appendix \ref{sec:proofs} for the full expression). We refer to Eq.\ref{2-order SDE} as the second-order SDE since it includes up to second-order partial derivatives in both the drift and diffusion coefficients.

\subsection{Choice of sharpness measure}
In this paper, we use the largest eigenvalue \(\lambda_1(\nabla^2 f(x))\) as a measure of sharpness, similar to prior works \citep{lyu2022understanding,arora2022understanding,Wen2022,damian2022self}. Geometrically, the top eigenvalue of the Hessian matrix at a given point represents the maximal curvature of the loss function along any direction. Moreover, it is closely related to the concept of sharpness (defined as the maximum perturbed loss difference) used by \cite{Foret2021} at the minima. Note that at the minima, \(\nabla f(x)=0\); thus,
\begin{align*}
    \max_{\|\epsilon\|\leq 1} f(x+\rho \epsilon)-f(x)&\approx \max_{\|\epsilon\|\leq 1} \rho \epsilon^\top \nabla f(x)+\frac{\rho^2}{2}\epsilon^\top \nabla^2 f(x)\epsilon\\
    &=\frac{\rho^2\lambda_1(\nabla^2f(x))}{2}.
\end{align*}
Another possible choice is the trace of the Hessian matrix. However, the trace of the Hessian scales with the dimensionality of the parameters, complicating cross-model comparisons and limiting this measure’s applicability.

We further establish a PAC-Bayes theorem that bounds the generalization error through the top eigenvalue of the Hessian matrix. This theorem is based on the general PAC-Bayes theorem \cite{PAC_Bayes} and applies to bounded losses, not limited to 0-1 loss as in the work of \cite{Foret2021,zhuang2022surrogate}.
\begin{theorem} \label{theorem:pac-bayes}
 {\rm (Generalization Bound)}
Assume that the loss function is bounded by \(L\), and the third-order partial derivative of the loss function is bounded by \(C\). Additionally, we assume \(f_{\mathcal{D}}(x)\leq \mathbb{E}_{\epsilon \sim \mathcal{N}(0, \sigma^2\mathbb{I}_d)}f_{\mathcal{D}}(x+\epsilon)\), as in \cite{Foret2021}. For any $\delta \in (0,1)$ and $\sigma>0$, with a probability over $1-\delta$ over the choice of $\Sc \sim \Dc^n$, we have 
\begin{align*}f_{\mathcal{D}}\left(x\right) & \leq f_{\Sc}\left(x\right)+\frac{d\sigma^{2}}{2}\lambda_{1}\left(\nabla^2f_{\Sc}\left(x\right)\right)+\frac{Cd^{3}\sigma^{3}}{6}\\
 & +\frac{L}{2\sqrt{n}}\sqrt{d\log\left(1+\frac{\|x\|^{2}}{d\sigma^{2}}\right)+O(1)+2\log\frac{1}{\delta}+4\log\left(n+d\right)}.
\end{align*}
where \(n\) is the number of samples.
\end{theorem}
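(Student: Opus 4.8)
The plan is to start from the classical PAC-Bayes theorem of \cite{PAC_Bayes} applied to a Gaussian prior $P = \mathcal{N}(0, \sigma^2 \mathbb{I}_d)$ and a Gaussian posterior $Q = \mathcal{N}(x, \sigma^2 \mathbb{I}_d)$ centered at the trained parameters. For these choices the KL divergence is $\mathrm{KL}(Q\|P) = \frac{\|x\|^2}{2\sigma^2}$, and since the loss is bounded by $L$ one obtains a bound of the form
\begin{align*}
\mathbb{E}_{\epsilon}f_{\mathcal{D}}(x+\epsilon) \leq \mathbb{E}_{\epsilon}f_{\mathcal{S}}(x+\epsilon) + \frac{L}{2\sqrt{n}}\sqrt{\mathrm{KL}(Q\|P) + O(1) + 2\log\tfrac{1}{\delta} + \text{(covering terms)}}.
\end{align*}
Using the assumption $f_{\mathcal{D}}(x) \leq \mathbb{E}_{\epsilon}f_{\mathcal{D}}(x+\epsilon)$ from \cite{Foret2021} lets me replace the left-hand side by $f_{\mathcal{D}}(x)$. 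The $\|x\|^2/(d\sigma^2)$ inside the logarithm and the $4\log(n+d)$ term come from the standard trick of taking a union bound over a discrete grid of values of $\sigma$ (or of the prior variance), which is why the bound holds simultaneously in the stated form; I would cite this as a known construction rather than re-derive it.

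The new ingredient is to control the empirical side $\mathbb{E}_{\epsilon}f_{\mathcal{S}}(x+\epsilon)$ by a second-order Taylor expansion around $x$. Write $f_{\mathcal{S}}(x+\epsilon) = f_{\mathcal{S}}(x) + \nabla f_{\mathcal{S}}(x)^\top \epsilon + \frac{1}{2}\epsilon^\top \nabla^2 f_{\mathcal{S}}(x)\epsilon + R(x,\epsilon)$ where the remainder satisfies $|R(x,\epsilon)| \leq \frac{C}{6}\|\epsilon\|_1^3$ (or $\frac{C}{6}\|\epsilon\|^3$ up to the dimension bookkeeping that produces the $d^3$ factor) by the assumed bound on the third derivative. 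Taking expectation over $\epsilon \sim \mathcal{N}(0,\sigma^2\mathbb{I}_d)$: the linear term vanishes by symmetry, the quadratic term gives $\frac{\sigma^2}{2}\mathrm{tr}(\nabla^2 f_{\mathcal{S}}(x)) \leq \frac{d\sigma^2}{2}\lambda_1(\nabla^2 f_{\mathcal{S}}(x))$ since each of the $d$ eigenvalues is at most $\lambda_1$, and the remainder contributes at most $\frac{C}{6}\mathbb{E}\|\epsilon\|_1^3 \leq \frac{Cd^3\sigma^3}{6}$ after bounding the third absolute moment of the Gaussian (here $\mathbb{E}|\epsilon_i|^3 = \sigma^3\sqrt{8/\pi} \leq \sigma^3 \cdot 2$, and $\|\epsilon\|_1^3 \leq d^2\sum_i|\epsilon_i|^3$ by power-mean, giving the $d^3$ scaling). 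Assembling these three pieces reproduces the $\frac{d\sigma^2}{2}\lambda_1$ and $\frac{Cd^3\sigma^3}{6}$ terms in the statement.

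The step I expect to be the most delicate is not any single inequality but the bookkeeping that makes all the dimension factors and the logarithmic/covering terms line up exactly as written — in particular getting the $O(1)$, the $4\log(n+d)$, and the $d\log(1+\|x\|^2/(d\sigma^2))$ to emerge cleanly from the union-bound-over-$\sigma$ argument, and being careful about whether $\|\epsilon\|$ or $\|\epsilon\|_1$ is the right norm in the third-order remainder (this choice is what determines the $d^3$ versus a smaller power). A secondary subtlety is justifying that the required expansion and the PAC-Bayes bound can be combined on the same high-probability event. I would handle the first by deferring to the standard statement of the general PAC-Bayes bound and only tracking how the Gaussian KL enters, and the second by noting that the Taylor bound is deterministic (it holds pointwise in $x$ given the smoothness assumptions), so it can be applied on whatever event the PAC-Bayes inequality provides.
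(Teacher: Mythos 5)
Your overall route is the paper's route: the general PAC-Bayes bound of \cite{PAC_Bayes} with Gaussian posterior $Q=\mathcal{N}(x,\sigma^2\mathbb{I}_d)$, Hoeffding-type control of the moment term for a loss bounded by $L$, a union bound to make the prior admissible, the assumption $f_{\mathcal{D}}(x)\leq \mathbb{E}_\epsilon f_{\mathcal{D}}(x+\epsilon)$ to de-randomize the left-hand side, and a third-order Taylor expansion of $\mathbb{E}_\epsilon f_{\mathcal{S}}(x+\epsilon)$ with the quadratic term bounded by $\tfrac{d\sigma^2}{2}\lambda_1(\nabla^2 f_{\mathcal{S}}(x))$ (your trace-then-$d\lambda_1$ step is an equivalent variant of the paper's pointwise bound $\epsilon^\top\nabla^2 f_{\mathcal{S}}(x)\epsilon\leq\lambda_1\|\epsilon\|^2$). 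However, two of the pieces you defer or sketch do not, as written, yield the theorem with its stated form and constants. First, with your declared prior $P=\mathcal{N}(0,\sigma^2\mathbb{I}_d)$ the KL is $\|x\|^2/(2\sigma^2)$, and no union bound over ``values of $\sigma$'' turns that into $d\log\bigl(1+\tfrac{\|x\|^2}{d\sigma^2}\bigr)$: the logarithmic form comes from letting the \emph{prior} variance be (approximately) the data-dependent optimum $\sigma^2+\|x\|^2/d$ while keeping the posterior variance at $\sigma^2$. The paper realizes this by a geometric grid of priors $P_j=\mathcal{N}\bigl(0,c\,e^{(1-j)/d}\mathbb{I}_d\bigr)$ with $c=\sigma^2(1+e^{4n/d})$ and $\delta_j=6\delta/(\pi^2 j^2)$, choosing $j^*$ so that $\sigma_{P_{j^*}}^2$ is within a factor $e^{1/d}$ of the optimum, bounding $\log(1/\delta_{j^*})$ to produce the $O(1)$ and $4\log(n+d)$ terms, and disposing separately of the regime $\|x\|^2>\sigma^2(e^{4n/d}-1)$ where the bound is trivially true (this case analysis is what keeps $j^*\geq 1$ and the $\log j^*$ estimate valid). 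Your parenthetical ``or of the prior variance'' shows you know where this lives, but the construction is the part of the proof that actually produces the terms inside the square root, so it cannot simply be cited away if the theorem is to be proved as stated.

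Second, your remainder bound does not reproduce the constant $\tfrac{Cd^3\sigma^3}{6}$. Bounding $|R|\leq\tfrac{C}{6}\|\epsilon\|_1^3$ and then using $\mathbb{E}|\epsilon_i|^3=2\sqrt{2/\pi}\,\sigma^3$ (which you round up to $2\sigma^3$) gives $\mathbb{E}\|\epsilon\|_1^3\lesssim 1.6\,d^3\sigma^3$, i.e.\ a bound of roughly $\tfrac{1.6\,Cd^3\sigma^3}{6}$ (or $\tfrac{2Cd^3\sigma^3}{6}$ with your rounding), strictly larger than what the theorem asserts. The paper instead controls the cubic sum term-by-term using first absolute moments, $\mathbb{E}|\epsilon_i|\leq(\mathbb{E}\epsilon_i^2)^{1/2}=\sigma$, which yields exactly $\tfrac{Cd^3\sigma^3}{6}$. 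So either switch to the first-absolute-moment bookkeeping or accept that your argument proves the statement only up to a universal constant factor in that term; as the theorem is stated with explicit constants, the distinction matters for a verbatim proof.
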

We defer the proof to Appendix \ref{proof:pac-bayes}. Theorem \ref{theorem:pac-bayes} suggests that minimizing the top eigenvalue of the Hessian matrix is crucial for improving generalization ability.

\section{Third-order SDE reveals implicit regularization in SAM}\label{section:4}
In this section, we delve into the discussion and derivation of the third-order SDE continuous-time approximation for SAM. In Section \ref{section:4.1}, we present heuristic derivations that provide intuitive insights into our approach. Following this, Section \ref{section:4.2} offers a formal third-order SDE approximation for SAM, establishing the mathematical rigor of our framework. Finally, in Section \ref{section:4.3}, we propose a corollary linking perturbation-eigenvector alignment with eigenvalue regularization, furthering our understanding of the implicit regularization effects inherent in SAM.

\subsection{Heuristic derivations for the third-order SDE}\label{section:4.1}
We begin by examining the drift coefficient in \cite{compagnoni2023sde} (Eq. \ref{2-order SDE}): 
\[
 -\nabla f(X_t)- \rho \mathbb{E}\bigg[\frac{ \nabla^2 f_\gamma(X_t) \nabla f_\gamma}{\left\|\nabla f_\gamma\right\|}\bigg] = -\nabla f(X_t)-\rho \nabla \mathbb{E}\|\nabla f_{\gamma}(X_t)\|,
\]
where the second term indicates that SAM penalizes trajectories with large loss gradients. However, this formulation does not reveal an implicit regularization effect concerning sharpness, specifically regarding the top eigenvalue of the Hessian matrix. Thus, understanding the implicit bias on the Hessian matrix requires a third-order Taylor expansion. The missing cubic term in the Taylor expansion is:
\begin{align*}
\frac{\rho^2}{2}\mathbb{E}\bigg[\frac{\nabla^3 f_\gamma(X_t) (\nabla f_\gamma,\nabla f_\gamma)}{\left\|\nabla f_\gamma\right\|^2}\bigg]=\frac{\rho^2}{2}\nabla\mathbb{E}\bigg[\frac{\nabla f_\gamma^{\top} \nabla^2 f_\gamma(X_t) \nabla f_\gamma}{\left\|\nabla f_\gamma\right\|^2}\bigg].
\end{align*}
The equality holds because \(\nabla f_\gamma\) is treated as a perturbation vector independent of \(X_t\), as SAM’s implementation does not involve differentiating with respect to it (See Section \ref{sam} for a detailed description of this part). This third-order term suggests that SAM employs an additional gradient measurement to compute a specific third derivative: the gradient of the second derivative along the direction of the gradient.

\textbf{Hessian-gradient alignment during training.} Recent findings indicate that during training, the gradient implicitly aligns with the top eigenvector of the Hessian matrix under certain conditions: (1) training with normalized full-batch gradient descent \citep{arora2022understanding}; (2) a locally quadratic loss landscape \citep{bartlett2023dynamics}; (3) training with SAM when very close to the minimizer manifold \citep{Wen2022}. This alignment phenomenon is crucial for interpreting the third-order term in our drift coefficient. Specifically, when the gradient is highly aligned with the top eigenvector, we have:
\begin{align*}
    \frac{\rho^2}{2}\mathbb{E}\left[\frac{\nabla^3 f_\gamma(X_t) (\nabla f_\gamma,\nabla f_\gamma)}{\left\|\nabla f_\gamma\right\|^2}\right]&\approx \frac{\rho^2}{2}\mathbb{E}\nabla^3 f_\gamma(X_t) (v_1(\nabla^2 f_\gamma(X_t)),v_1(\nabla^2 f_\gamma(X_t)))\\
    &=\frac{\rho^2}{2}\mathbb{E} \nabla \lambda_1(\nabla^2 f_{\gamma}(X_t)),
\end{align*}
where the final equality follows from the properties of differentiating eigenvalues (see \cite{magnus1985differentiating} for a detailed discussion).

If this alignment phenomenon holds, we can conclude that the implicit bias of the drift coefficient aligns with the gradient of the top eigenvalue of the Hessian, thereby implicitly minimizing sharpness.

\subsection{Formal third-order SDE approximation for SAM}\label{section:4.2}
In this subsection, we present the general formulation of the SDE for SAM. We refer to our SDE (Eq. \ref{SDE}) as the third-order SDE, to distinguish it from the second-order SDE (Eq. \ref{2-order SDE}). For the complete statements and proofs, we refer the reader to Appendix \ref{sec:proofs}.

\begin{theorem}
\label{Theorem:3.1}{\rm(Third-order SDE for SAM, Informal Statement of Theorem \ref{Theorem:2})} Let \(0<\eta <1, T>0\), \(N=\lfloor T/\eta \rfloor\), and \(\{x_k:k\geq 0\}\) denote the sequence of discrete SAM iterations defined by Eq. \ref{eq:sam}. Define $\{X_t:t\in [0,T]\}$ as the stochastic process satisfying the SDE
\begin{align}\label{SDE}
    dX_t=-\nabla \widetilde{f}^{S A M}(X_t)dt+\sqrt{\eta} (\Sigma^{SAM}(X_t))^{\frac{1}{2}}dW_t,\quad X_0=x_0 
\end{align}
with \(\widetilde{f}^{S A M}(X_t):=f(X_t)+\rho \mathbb{E}\|\nabla f_{\gamma}(X_t)\|+\frac{\rho^2}{2}\mathbb{E}\frac{\nabla f_\gamma^{\top}\nabla^2 f_\gamma(X_t) \nabla f_\gamma}{\left\|\nabla f_\gamma\right\|^2}\),
\begin{align*}\Sigma^{SAM}(X_t):=\Sigma^{1,1}(X_t)+\rho(\Sigma^{1,2}(X_t)+\Sigma^{1,2}(X_t)^{\top})+\rho^2\big(\Sigma^{2,2}(X_t)+\frac{1}{2}(\Sigma^{1,3}(X_t)+\Sigma^{1,3}(X_t)^{\top})\big),
\end{align*}
where \(\Sigma^{a,b}\) denotes the covariance matrix of the \(a\)-th and \(b\)-th terms in the Taylor expansion of the perturbed loss (see Appendix \ref{sec:proofs} for the full expression). 

Under sufficient regularity conditions, let \(\rho = \mathcal{O} (\eta^{\frac{1}{3}})\). Then,  $\{X_t:t \in [0,T]\}$ is an order-1 weak approximation of \(\{x_k:k\geq 0\}\), i.e., for any test function \(g\) of at most polynomial growth, there exists a constant \(C\) independent of \(\eta\) such that
\begin{align*}\max_{k=0,1,...,N}\left|\mathbb{E} g(x_k)-\mathbb{E} g(X_{k\eta})\right|\leq C \eta.
\end{align*}
\end{theorem}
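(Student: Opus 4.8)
I would prove this by the moment‑matching methodology for weak continuous‑time approximations of stochastic iterative algorithms \citep{li2017stochastic} (in the spirit of Milstein's fundamental theorem on weak convergence order). The plan has two parts: (i) establish that the one‑step increment of the discrete SAM recursion \eqref{eq:sam} and that of the SDE \eqref{SDE} agree in their first moment up to $\mathcal{O}(\eta^2)$ and in their second (and all higher) moments up to $\mathcal{O}(\eta^2)$; (ii) feed these local estimates into the general comparison theorem, which, under standard regularity and moment‑boundedness conditions, upgrades them to the uniform bound $\max_{k\le N}|\mathbb{E} g(x_k)-\mathbb{E} g(X_{k\eta})|\le C\eta$.

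For (i), I would fix $x_k=x$, write $\epsilon_\gamma=\nabla f_\gamma(x)/\|\nabla f_\gamma(x)\|$, and Taylor‑expand the perturbed gradient in $\rho$ with a Lagrange remainder:
\begin{align*}
\nabla f_\gamma(x+\rho\epsilon_\gamma)=\nabla f_\gamma(x)+\rho\,\nabla^2 f_\gamma(x)\epsilon_\gamma+\tfrac{\rho^2}{2}\nabla^3 f_\gamma(x)(\epsilon_\gamma,\epsilon_\gamma)+R_\gamma,\qquad \|R_\gamma\|\le\tfrac{1}{6}\rho^3\sup\|\nabla^4 f_\gamma\|.
\end{align*}
Multiplying by $-\eta$, taking expectation over $\gamma_k$, and using $\mathbb{E}\nabla f_\gamma=\nabla f$ together with the identities $\mathbb{E}[\nabla^2 f_\gamma\nabla f_\gamma/\|\nabla f_\gamma\|]=\nabla\mathbb{E}\|\nabla f_\gamma\|$ and $\mathbb{E}[\nabla^3 f_\gamma(\epsilon_\gamma,\epsilon_\gamma)]=\nabla\mathbb{E}[\nabla f_\gamma^\top\nabla^2 f_\gamma\nabla f_\gamma/\|\nabla f_\gamma\|^2]$ (valid because $\epsilon_\gamma$ is treated as frozen, cf.\ Section \ref{sam}), the first three terms sum to $-\eta\nabla\widetilde f^{SAM}(x)$, while, since $\rho=\mathcal{O}(\eta^{1/3})$, the remainder contributes only $\eta\,\mathbb{E}\|R_\gamma\|=\mathcal{O}(\eta\rho^3)=\mathcal{O}(\eta^2)$ --- this is exactly why the scaling $\rho=\mathcal{O}(\eta^{1/3})$ is the right one. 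Next, for the second moment I would expand the outer product $\nabla f_\gamma(x+\rho\epsilon_\gamma)\nabla f_\gamma(x+\rho\epsilon_\gamma)^\top$ with the same series and collect powers of $\rho$, obtaining $\mathbb{E}[(x_{k+1}-x)(x_{k+1}-x)^\top]=\eta^2\big(\Sigma^{SAM}(x)+\nabla\widetilde f^{SAM}(x)\nabla\widetilde f^{SAM}(x)^\top\big)+\mathcal{O}(\eta^3)$; on the SDE side the diffusion contributes $\int_{k\eta}^{(k+1)\eta}\eta\,\Sigma^{SAM}(X_s)\,ds=\eta^2\Sigma^{SAM}(x)+\mathcal{O}(\eta^3)$ and the drift the rank‑one term, so the two match. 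Third and higher moments of both processes are $\mathcal{O}(\eta^3)$, hence agree within the required $\mathcal{O}(\eta^2)$. I would compute the SDE‑side one‑step moments cleanly via its infinitesimal generator / an Itô--Taylor expansion.

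For (ii), I would invoke the general weak‑approximation theorem, which requires: (a) $f_\gamma$ and its derivatives up to the needed order are smooth with at most polynomial growth; (b) \eqref{SDE} is well‑posed with finite moments of all orders, in particular that $\Sigma^{SAM}$ admits a measurable, polynomially growing square root in the stated $\rho$‑regime; and (c) the discrete iterates $\{x_k\}$ have moments bounded uniformly in $\eta$. A telescoping argument over the $N=\lfloor T/\eta\rfloor$ steps combined with Gronwall‑type stability of the SDE semigroup then converts the per‑step $\mathcal{O}(\eta^2)$ discrepancies into the global $\mathcal{O}(\eta)$ error, uniformly over test functions of polynomial growth --- these are precisely the hypotheses packaged into the formal statement (Theorem \ref{Theorem:2}, Appendix \ref{sec:proofs}).

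The step I expect to be the main obstacle is not the moment bookkeeping but the analytic control of the perturbation map $\gamma\mapsto\nabla f_\gamma(x)/\|\nabla f_\gamma(x)\|$: it is non‑smooth and blows up wherever $\nabla f_\gamma(x)=0$, so each Taylor expansion above, as well as the identities linking the drift terms to gradients of $\mathbb{E}\|\nabla f_\gamma\|$ and of $\mathbb{E}[\nabla f_\gamma^\top\nabla^2 f_\gamma\nabla f_\gamma/\|\nabla f_\gamma\|^2]$, must be justified under an assumption keeping $\|\nabla f_\gamma\|$ bounded away from zero (or via a stopping‑time localization), while the resulting drift and diffusion coefficients still meet the growth conditions demanded by the comparison theorem. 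Verifying that $\Sigma^{SAM}$ --- a genuine covariance perturbed by $\mathcal{O}(\rho)$ and $\mathcal{O}(\rho^2)$ terms that need not be positive semidefinite --- remains an admissible diffusion coefficient, and that the constant $C$ is genuinely independent of $\eta$ over the whole horizon $[0,T]$, is the delicate part that the full regularity assumptions in the appendix exist to handle.
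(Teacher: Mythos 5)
Your proposal follows essentially the same route as the paper's proof: a third-order Taylor expansion of the perturbed gradient with a Lagrange remainder giving first- and second-moment matching up to $\mathcal{O}(\eta\rho^{3})=\mathcal{O}(\eta^{2})$ (this is the paper's Lemma \ref{lemma:1}), SDE-side one-step moments via the standard expansion (the paper's Lemma \ref{lemma:2}, from \cite{li2017stochastic}), and then Milstein's weak-convergence comparison theorem (the paper's Lemma \ref{theorem:1}) under the regularity Assumption \ref{assumption:1} to obtain the uniform $\mathcal{O}(\eta)$ bound. Your closing remarks about controlling the normalized-gradient map and the admissibility of $\Sigma^{SAM}$ correctly identify issues that the paper delegates to its regularity assumptions rather than treating separately.
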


Our proof relies on the third-order Taylor expansion of \( f_{\gamma_k}\big(X_t+\rho \frac{\nabla f_{\gamma_k}}{\|\nabla f_{\gamma_k}\|}\big) \), carefully matching the first- and second-order conditional moments and quantifying the errors for higher-order terms. Our third-order SDE reveals that SAM’s implicit bias includes a complex combination of second-order and third-order terms, with scales of \(\rho\) and \(\frac{\rho^2}{2}\), respectively. Compared to \cite{compagnoni2023sde}, our theorem offers two main advantages: first, we allow \(\rho\) to take larger values (\(\eta^{\frac{1}{3}}\) compared to \(\eta^{\frac{1}{2}}\) in \cite{compagnoni2023sde}), which is more consistent with real-world settings; equivalently, our SDE achieves a lower approximation error for a fixed \(\rho\). Second, our SDE explicitly captures SAM’s implicit bias on the Hessian matrix, manifesting as the gradient of the Hessian in the gradient’s direction. Additionally, the diffusion coefficient in Eq. \ref{SDE} implies that SAM injects additional noise in the form of the covariance of the higher-order terms in the Taylor expansion of the perturbed loss. This curvature-dependent noise aligns with recent studies \citep{gatmiry2024inductive,gatmiry2024simplicity}, which demonstrate that label noise in SGD exhibits similar behavior to SAM.

\subsection{Perturbation-eigenvector alignment and eigenvalue regularization}\label{section:4.3}
The implicit bias introduced by the third term in the drift coefficient of the SDE (Eq. \ref{SDE}), i.e., \(\frac{\rho^2}{2}\mathbb{E}\frac{\nabla^3 f_\gamma(X_t) (\nabla f_\gamma,\nabla f_\gamma)}{\left\|\nabla f_\gamma\right\|^2}\), remains difficult to understand. As discussed in Section \ref{section:4.1}, if we assume that the perturbation vector is well-aligned with the top eigenvector, we can interpret the cubic term as the gradient of the top eigenvalue, leading to an implicit bias that decreases sharpness. Next, we quantify and formalize this heuristic approach. Define ${\rm Align}(\epsilon,v_1):=1-\min_{s \in \{\pm 1\}}\|\frac{\epsilon}{\|\epsilon\|}-s\cdot v_1\|$ as a measure of alignment between the perturbation vector \(\epsilon\) and the top eigenvector \(v_1\). It is worth noting that \(+v_1\) and \( -v_1\) are equivalent eigenvectors, which is why we define alignment as the maximum over both \(+v_1\) and \(-v_1\). 

\begin{corollary}\label{Corollary:3}
    Recall that \(\epsilon^{SAM}_{\gamma}\) is defined in Eq. \ref{epsilon}. Let \(s^*\) denote the direction scalar, i.e., \(s^*=\argmin_{s \in \{\pm 1\}} \|\frac{\epsilon}{\|\epsilon\|}-s\cdot v_1\|\). Under the same conditions as in Theorem \ref{Theorem:3.1}, and assuming a positive eigenvalue gap (see Assumption \ref{assumption:2} for a definition), we have the following:
    \newline 1. If \( {\rm Align}(\epsilon^{SAM}_{\gamma},v_1(\nabla^2 f_{\gamma}(X_t)))\geq 1-\mathcal{O}(\rho)\), then the SDE (Eq. \ref{SDE}) becomes
\begin{equation} \label{SDE:2}
    dX_t=-\nabla \widetilde{f}^{S A M}_{\rho}(X_t)dt+\sqrt{\eta} (\Sigma^{SAM})^{\frac{1}{2}}dW_t,
\end{equation}
where \(\nabla\widetilde{f}^{S A M}_{\rho}(X_t):=\nabla f(X_t)+\rho \nabla \mathbb{E}\|\nabla f_{\gamma}(X_t)\|+\frac{\rho^2}{2} \nabla\mathbb{E} \lambda_1(\nabla^2 f_{\gamma}(X_t))\);

2. If \( {\rm Align}(\epsilon^{SAM}_{\gamma},v_1(\nabla^2 f_{\gamma}(X_t)))\geq 1-\mathcal{O}(\rho^2)\), then the SDE (Eq. \ref{SDE}) becomes
\begin{equation}\label{sde:3}
    dX_t=-\nabla \widetilde{f}^{S A M}_{\rho^2}(X_t)dt+\sqrt{\eta} (\Sigma^{SAM})^{\frac{1}{2}}dW_t,
\end{equation}
where \(\nabla\widetilde{f}^{S A M}_{\rho^2}(X_t):=\nabla f(X_t)+\rho \mathbb{E} \big[s^* \cdot \lambda_1(\nabla^2 f_{\gamma}(X_t)) v_1(\nabla^2 f_{\gamma}(X_t))\big]+\frac{\rho^2}{2} \nabla\mathbb{E} \lambda_1(\nabla^2 f_{\gamma}(X_t))\).
\end{corollary}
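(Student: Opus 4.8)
The plan is to start from the third-order SDE (Eq.~\ref{SDE}) and show that, under the stated alignment hypotheses, the cubic drift term
\(\frac{\rho^2}{2}\,\mathbb{E}\big[\nabla^3 f_\gamma(X_t)(\nabla f_\gamma,\nabla f_\gamma)/\|\nabla f_\gamma\|^2\big]\)
can be replaced by \(\frac{\rho^2}{2}\nabla\mathbb{E}\,\lambda_1(\nabla^2 f_\gamma(X_t))\) with an error absorbed into the \(O(\eta)\) weak-approximation bound, and similarly for the second-order term in case~2. The first step is to write \(u_\gamma := \epsilon^{SAM}_\gamma = \nabla f_\gamma/\|\nabla f_\gamma\|\) and \(v_\gamma := v_1(\nabla^2 f_\gamma(X_t))\), and to translate the alignment assumption \({\rm Align}(u_\gamma,v_\gamma)\ge 1-\delta\) into the bound \(\|u_\gamma - s^* v_\gamma\| \le \delta\). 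Then I would expand \(\nabla^3 f_\gamma(X_t)(u_\gamma,u_\gamma)\) around \((s^* v_\gamma, s^* v_\gamma)\): since \(\nabla^3 f_\gamma\) is a bounded symmetric 3-tensor (using the \(C\)-bound on third derivatives, or a local regularity assumption), bilinearity gives
\(\|\nabla^3 f_\gamma(X_t)(u_\gamma,u_\gamma) - \nabla^3 f_\gamma(X_t)(v_\gamma,v_\gamma)\| \le 2C\|u_\gamma - s^* v_\gamma\| + C\|u_\gamma - s^* v_\gamma\|^2 = O(\delta)\),
where I used \((s^*)^2 = 1\) so the sign drops out of the quadratic form. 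With \(\delta = O(\rho)\) this error is \(O(\rho)\), so after multiplying by \(\frac{\rho^2}{2}\) the discrepancy in the drift is \(O(\rho^3) = O(\eta)\), which is exactly the order of the weak-approximation error already present in Theorem~\ref{Theorem:3.1}; hence the modified SDE is still an order-1 weak approximation and Eq.~\ref{SDE:2} follows after invoking the eigenvalue-differentiation identity \(\nabla^3 f_\gamma(X_t)(v_\gamma,v_\gamma) = \nabla\lambda_1(\nabla^2 f_\gamma(X_t))\) (valid under the positive eigenvalue gap, Assumption~\ref{assumption:2}, per \cite{magnus1985differentiating}), together with the fact that \(\mathbb{E}\) and \(\nabla\) commute because the perturbation direction is frozen.

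For part~2, the stronger assumption \(\delta = O(\rho^2)\) means the cubic-term error is now \(O(\rho^4) = o(\eta)\), so Eq.~\ref{SDE:2} a fortiori holds; the new content is the rewriting of the \emph{second-order} drift term. Here I would note that \(\rho\,\mathbb{E}[\nabla^2 f_\gamma(X_t)\nabla f_\gamma/\|\nabla f_\gamma\|] = \rho\,\mathbb{E}[\nabla^2 f_\gamma(X_t) u_\gamma]\), and substitute \(u_\gamma = s^* v_\gamma + (u_\gamma - s^* v_\gamma)\). Since \(\nabla^2 f_\gamma(X_t) v_\gamma = \lambda_1(\nabla^2 f_\gamma(X_t)) v_\gamma\) by definition of the top eigenpair, the leading piece is \(\rho\,\mathbb{E}[s^*\lambda_1(\nabla^2 f_\gamma(X_t)) v_\gamma]\), and the remainder is bounded by \(\rho\,\|\nabla^2 f_\gamma(X_t)\|\cdot\|u_\gamma - s^* v_\gamma\| = O(\rho\cdot\rho^2) = O(\rho^3) = O(\eta)\) using the bound on the Hessian operator norm. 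Combining the two replacements gives the drift \(\nabla\widetilde f^{SAM}_{\rho^2}\) in Eq.~\ref{sde:3}, again with total error \(O(\eta)\).

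The main obstacle I anticipate is the bookkeeping around what "the SDE becomes" means rigorously: strictly speaking, replacing the drift changes the solution of the SDE, so I must argue that the two SDE solutions have \(O(\eta)\)-close weak statistics (not that the processes coincide). The clean way is to appeal to a stability/continuity estimate for the weak-approximation framework of \cite{li2017stochastic}: if two SDEs have drifts differing by \(O(\rho^3) = O(\eta)\) in the relevant norm and identical diffusion coefficients, then by the same one-step-error-plus-Gronwall argument used to prove Theorem~\ref{Theorem:3.1}, the gap in \(\mathbb{E}g(X_t)\) over \(t\in[0,T]\) is \(O(\eta)\) for test functions of polynomial growth — so both the original Eq.~\ref{SDE} and the simplified Eq.~\ref{SDE:2}/\ref{sde:3} are order-1 weak approximations of the \emph{same} discrete SAM iteration, which is the precise sense in which the statement holds. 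A secondary technical point is ensuring the third-derivative bound and eigenvalue-gap condition hold \(\mathbb{E}_\gamma\)-almost surely (or in expectation) so the per-\(\gamma\) estimates survive the expectation; this is where the regularity conditions inherited from Theorem~\ref{Theorem:3.1} and Assumption~\ref{assumption:2} do the work, and I would state explicitly that the alignment hypothesis is assumed to hold uniformly in \(\gamma\) (or on an event of probability \(1 - O(\eta)\)), so the contributions from the complementary event are negligible under the polynomial-growth and boundedness assumptions.
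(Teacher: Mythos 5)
Your proposal is correct and follows essentially the same route as the paper's proof: translate the alignment hypothesis into \(\|\epsilon_\gamma^{SAM}/\|\epsilon_\gamma^{SAM}\| - s^* v_1\| = \mathcal{O}(\rho)\) (resp.\ \(\mathcal{O}(\rho^2)\)), bound the resulting discrepancies in the second- and third-order drift terms by \(\mathcal{O}(\rho^3)\) (resp.\ \(\mathcal{O}(\rho^4)\) and \(\mathcal{O}(\rho^3)\)), and absorb them into the \(\mathcal{O}(\eta)\) weak-approximation error since \(\rho = \mathcal{O}(\eta^{1/3})\). The paper simply states these three bounds and concludes, so your extra care about the eigenvalue-differentiation identity, the frozen-perturbation expectation, and the drift-perturbation stability argument only makes explicit what the paper leaves implicit.
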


The proof is deferred to Appendix \ref{proof:corollary}. In Corollary \ref{Corollary:3}, we rigorously formalize our intuition from Section \ref{section:4.1}. If the alignment is at least \(1-\mathcal{O}(\rho)\), we conclude that the SAM trajectory comprises three components: the gradient of the loss, the gradient of the gradient norm, and the gradient of the top eigenvalue, with respective scales \(1, \rho\), and \(\frac{\rho^2}{2}\). Under this well-aligned condition, the SAM trajectory minimizes the loss while implicitly regularizing both the gradient norm and sharpness. This demonstrates SAM’s complex implicit bias, which is not solely influenced by second- or third-order terms, as summarized in previous work \citep{Wen2022,compagnoni2023sde}. For empirical evidence supporting this observation, we refer readers to Appendix \ref{empirical}.

Furthermore, if the alignment is at least \(1-\mathcal{O}(\rho^2)\), then the gradient of the gradient norm oscillates in the direction of the top eigenvector. Notably, \cite{bartlett2023dynamics} derived a similar discrete SAM dynamic under specific conditions (Theorem 20), where the parameter trajectory oscillates in the direction of the top eigenvector while regularizing the leading eigenvalue. However, their conditions are stricter than ours, assuming that the parameters are already close to the minimum and requiring a specific initialization. When these conditions are met, they require an alignment of the gradient with the top eigenvector of at least \(1-\mathcal{O}(\eta\rho)=1-\mathcal{O}(\rho^4)\). In comparison, our SDE framework is more general.

\textbf{Comparison with \cite{Wen2022}.} \cite{Wen2022} derived an implicit bias similar to our cubic term in the third-order SDE (Eq. \ref{SDE:2}) for SAM through the analysis of the Riemannian flow near the minimizer manifold. However, our work fundamentally differs from theirs. First, their theory requires a much longer training time \(\Theta(\eta^{-1}\rho^{-2})\) compared to our \(\Theta(\eta^{-1})\). Thus, our SDE corresponds to the initial phase of their analysis regarding time scale, during which they do not conclude any implicit bias. In contrast, our SDE (Eq. \ref{SDE:2}), which indicates that the implicit bias comprises three components with different scales, provides richer insights in this phase. Second, they require \(\eta {\rm ln}(1/\rho)\) to be sufficiently small, causing \(\rho\) to be exponentially smaller than \(\eta\), whereas our theory accommodates a more practical range, \(\rho=\mathcal{O}(\eta^\frac{1}{3})\).

\section{Eigen-SAM: an explicit regularization method for the top eigenvalue of the Hessian}\label{section:5}
\subsection{Failure of perturbation-eigenvector alignment in practice}\label{sec:motivation}
Within the theoretical framework of Section \ref{section:4}, a natural question arises: \textit{Is the perturbation-eigenvector alignment sufficient in practice for SAM to effectively minimize sharpness?} Unfortunately, we have empirically verified that such alignment may be poor in practice, even for relatively simple models. As a result, the regularization effect on the largest eigenvalue, as discussed in Corollary \ref{Corollary:3}, may not be clearly observable in practical scenarios. 

To investigate this phenomenon, we trained a 6-layer SimpleCNN model, as used in \cite{jastrzebski2021catastrophic,deng2024leakage}, on CIFAR-10 \citep{krizhevsky2009learning}. Figure \ref{fig:1} illustrates two key findings from our experiments. The left panel shows that the alignment between the perturbation vector and the top eigenvector is indeed poor during training. Consequently, the right panel reveals that SAM is unable to efficiently minimize the top eigenvalue when alignment is weak. These results highlight the limitations of SAM in real-world scenarios where ideal alignment cannot be assumed.

\begin{figure}
     \centering
     \begin{subfigure}[b]{0.49\textwidth}
         \centering
         \includegraphics[width=\textwidth]{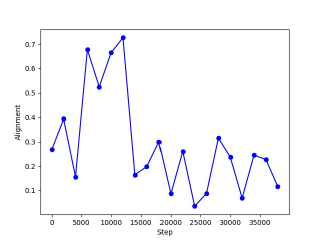}
         \caption{Perturbation-eigenvector alignment}
         \label{fig:alignment}
     \end{subfigure}
     \hfill
     \begin{subfigure}[b]{0.49\textwidth}
         \centering
         \includegraphics[width=\textwidth]{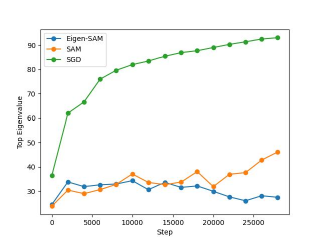}
         \caption{Top eigenvalue of Hessian}
         \label{fig:eigenvalue}
     \end{subfigure}
     \caption{Alignment and top eigenvalue for a 6-layer CNN model trained on CIFAR-10. The left panel shows the trend of alignment during SAM training; the shaded area represents the 95\% confidence interval. The right panel displays the trend of the top eigenvalue over the course of training.}
     \label{fig:1}
\end{figure}

\subsection{Proposed method: Eigen-SAM} \label{sec:method}
To address the issue of poor alignment, we propose a novel algorithm called Eigen-SAM, which aims to explicitly align the perturbation vector with the top eigenvector. This approach makes SAM's update closer to the SDE approximation in Eq. \ref{SDE:2}, where the third-order term in the drift coefficient can effectively minimize the largest eigenvalue. To achieve this alignment, we estimate the top eigenvector of the Hessian matrix once every \(p\) mini-batch steps (with \(p = 100\) in our implementation) using the power method for \(q\) iterations (with \(q = 5\) in our implementation). This strategy of intermittently estimating the Hessian matrix has been shown to be effective in practice \citep{liu2023sophia}.

After obtaining an estimate \(\hat{v}\) of the top eigenvector, we decompose it into components parallel and perpendicular to the gradient direction, then add the perpendicular component to the perturbation vector to enhance alignment explicitly:  
\begin{align}
    \hat{v} &= \hat{v}_{\parallel} + \hat{v}_{\perp} \label{decomposition}\\
    \epsilon_{\gamma}^{\text{Eigen-SAM}} &= \frac{\nabla f_{\gamma}}{\|\nabla f_{\gamma}\|} + \alpha \cdot \mathrm{sign}(\langle \nabla f_{\gamma}, \hat{v} \rangle) \hat{v}_\perp, \label{perturbation}
\end{align}
where \(\alpha\) is a hyperparameter that controls the strength of the explicit alignment. Since \(+v\) and \(-v\) are equivalent eigenvectors, we include \(\mathrm{sign}(\langle \nabla f_{\gamma}(x), \hat{v} \rangle)\) to determine the direction of \(\hat{v}\). Here, we always choose \(\hat{v}\) to have a smaller angle with the gradient. The full algorithm is presented in Algorithms \hyperref[algorithm:1]{1} and \hyperref[algorithm:2]{2}. In Appendix \ref{sec:theo_properties}, we provide an in-depth discussion of the theoretical properties of Eigen-SAM, including sufficient conditions for improving alignment (Proposition \ref{Proposition}) and its convergence rate (Theorem \ref{thm:convergence}), which is comparable to that of SAM.
\newpage
\begin{multicols}{2} 
\begin{algorithm}[H]
    \caption{Power iteration to estimate the top eigenvector}\label{algorithm:1}
    \begin{algorithmic}[1]
            \State Initialize \(\hat{v}\) as a random unit vector
            \For {$t_2=1,2,\ldots,q$}
            \State Compute Hessian-vector product \(\hat{v} = \nabla^2 f(x) \cdot \hat{v}\)
            \State Normalize \(\hat{v}\): \(\hat{v} \gets \hat{v} / \|\hat{v}\|\)
            \EndFor
            \State \Return \(\hat{v}\)
    \end{algorithmic} 
\end{algorithm}
\begin{algorithm}[H]
    \caption{Eigen-SAM}\label{algorithm:2}
    \begin{algorithmic}[1]
    
    \For {$t_1=1,2,\ldots,T$}
        \State Compute mini-batch loss \(f_{\gamma}(x_k)\)
        \If {$t_1 \bmod p = 1$}
            \State Use Algorithm \ref{algorithm:1} to estimate \(\hat{v}\)
        \EndIf
        \State Compute the perturbation \(\epsilon_t\) per Eq. \ref{decomposition} and \ref{perturbation}
        \State Perturb the model: \(\Tilde{x}_t = x_k + \rho \epsilon_t\)
        \State Update parameters: \(x_{k+1} = x_{k} - \eta \nabla f_{\gamma}(\Tilde{x}_k)\)
    \EndFor
    \State \Return \(x_k\)
    \end{algorithmic} 
\end{algorithm}
\end{multicols}

\subsection{Analysis of additional computational overhead} 
The additional overhead in Eigen-SAM arises from running the Hessian-vector product \(q\) times every \(p\) steps to estimate the top eigenvector. The Hessian-vector product requires roughly \(1-2\) times the time needed to compute the gradient, so the overhead of our algorithm is approximately \(2 + \frac{q}{p}\) to \(2 + \frac{2q}{p}\) times that of SGD, compared to \(2\) times for standard SAM. For larger models, the computation time for the Hessian-vector product remains nearly constant. For a detailed analysis of the computation cost of Hessian-vector products, we refer readers to \cite{dagréou2024howtocompute}.

\section{Experiments}\label{section:6}
\subsection{Numerical simulation of the third-order SDE}
In this subsection, we validate the approximation error between our proposed SDE (Eq. \ref{SDE}) and the discrete SAM algorithm (Eq. \ref{eq:sam}). We trained a fully-connected network with one hidden layer, consisting of 784 hidden units and using GeLU activation, on the MNIST dataset \citep{deng2012mnist}. The training was conducted with \(\eta=0.01\) and \(\rho=0.2\) (where \(\rho \approx \eta^{\frac{1}{3}}\)). During training, we carefully tracked several key metrics, including training loss, test loss, test accuracy, parameter norm, gradient norm, and the top eigenvalue of the Hessian, as shown in Figure \ref{fig:combined}. 

Our results demonstrate that the approximation error of our third-order SDE is significantly lower than that of the previous second-order SDE. Specifically, the curves of our SDE closely match those of the discrete SAM across all tracked metrics, underscoring the accuracy and reliability of our approximation. This close alignment suggests that our proposed continuous-time approximation provides a more precise representation of the discrete SAM dynamics, thus enhancing the theoretical understanding of SAM.

\begin{figure}[htbp]
    \centering
    \begin{subfigure}[b]{0.32\textwidth}
        \centering
        \includegraphics[width=\textwidth]{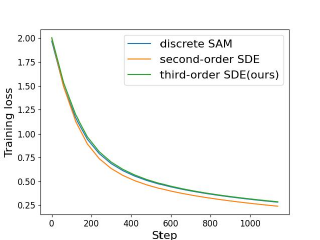}
        \caption{Training Loss}
    \end{subfigure}
    \begin{subfigure}[b]{0.32\textwidth}
        \centering
        \includegraphics[width=\textwidth]{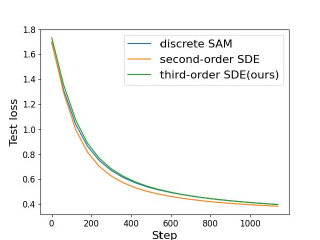}
        \caption{Test Loss}
    \end{subfigure}
    \begin{subfigure}[b]{0.32\textwidth}
        \centering
        \includegraphics[width=\textwidth]{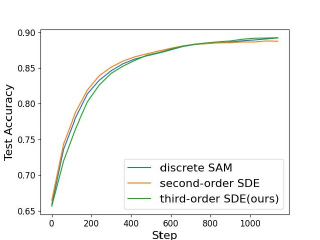}
        \caption{Test Accuracy}
    \end{subfigure}

    \begin{subfigure}[b]{0.32\textwidth}
        \centering
        \includegraphics[width=\textwidth]{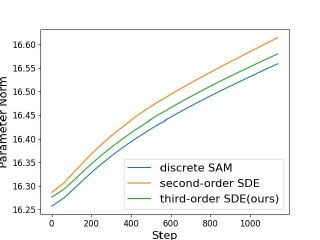}
        \caption{Parameter Norm}
    \end{subfigure}
    \begin{subfigure}[b]{0.32\textwidth}
        \centering
        \includegraphics[width=\textwidth]{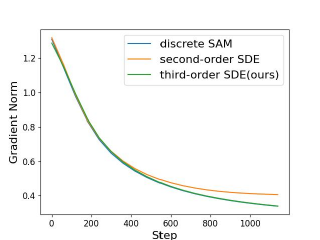}
        \caption{Gradient Norm}
    \end{subfigure}
    \begin{subfigure}[b]{0.32\textwidth}
        \centering
        \includegraphics[width=\textwidth]{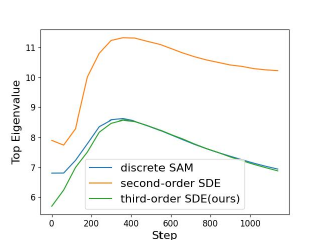}
        \caption{Top Eigenvalue}
    \end{subfigure}

    \caption{Training dynamics of discrete SAM, second-order SDE, and third-order SDE during training. Metrics include training loss, test loss, test accuracy, parameter norm, gradient norm, and the top Hessian eigenvalue. Each plot illustrates how each approach affects loss dynamics and key stability metrics.}
    \label{fig:combined}
\end{figure}

\subsection{Image classification from scratch}
To evaluate the effectiveness of Eigen-SAM, we applied it to several image classification tasks on benchmark datasets, including CIFAR-10 \citep{krizhevsky2009learning}, CIFAR-100 \citep{krizhevsky2009learning}, Fashion-MNIST \citep{xiao2017fashion}, and SVHN \citep{netzer2011reading}. For these tasks, we used ResNet-18 \citep{he2016deep}, ResNet-50 \citep{he2016deep}, and WideResNet-28-10 \citep{zagoruyko2016wide} models.

We selected SGD as the base optimizer and applied basic data augmentation techniques, including horizontal flips, padding by four pixels, and random cropping. The batch size was set to 256, with training conducted for 200 epochs. We used an initial learning rate of 0.1 for CIFAR-10, Fashion-MNIST, and CIFAR-100, and 0.01 for SVHN, adjusting the learning rate over time with a cosine schedule. The weight decay was set to \(5 \times 10^{-5}\), and the momentum was 0.9. Detailed hyperparameter settings are provided in Appendix \ref{experiment}.

The test set performance, reported in Table \ref{tab: 1} along with the \(95\%\) confidence interval, shows that Eigen-SAM consistently achieves state-of-the-art performance across various datasets and models, validating its effectiveness and robustness.

\begin{table}[h]
\centering
\caption{Test accuracy on CIFAR-10, CIFAR-100, Fashion-MNIST, SVHN.}
\label{tab: 1}
\resizebox{0.88\columnwidth}{!}{%
\begin{tabular}{llcccc}
Architecture     &   Method  & CIFAR-10 & CIFAR-100 & Fashion-MNIST & SVHN \\ \midrule \midrule

ResNet18    &   SGD     &    $94.8_{\pm 0.2}$        &  $74.6_{\pm 0.2}$  & $94.9_{\pm 0.2}$ &  $96.1_{\pm 0.1}$   \\
            &   SAM     &    $95.5_{\pm 0.1}$      & $77.4_{\pm 0.2}$ & $95.4_{\pm 0.1}$ & $96.3_{\pm 0.1}$   \\
            &   \multicolumn{1}{l}{Eigen-SAM}  & $\textbf{95.9}_{\pm 0.2}$ & $\textbf{78.3}_{\pm 0.2}$ & $\textbf{95.6}_{\pm 0.2}$& $\textbf{96.5}_{\pm 0.1}$   \\
            \midrule
ResNet50    &   SGD     &    $95.0_{\pm 0.1}$        &  $76.6_{\pm 0.2}$  & $94.8_{\pm 0.1}$ &  $96.1_{\pm 0.1}$   \\
            &   SAM     &    $95.6_{\pm 0.2}$      & $79.0_{\pm 0.2}$ & $95.4_{\pm 0.1}$ & $96.4_{\pm 0.1}$   \\
            &   \multicolumn{1}{l}{Eigen-SAM}  & $\textbf{96.2}_{\pm 0.1}$ & $\textbf{79.7}_{\pm 0.1}$ & $\textbf{95.7}_{\pm 0.1}$& $\textbf{96.6}_{\pm 0.1}$   \\
             \midrule
WideResNet-28-10   &   SGD   &  $95.7_{\pm 0.1}$ & $79.8_{\pm 0.2}$ & $95.1_{\pm 0.1}$& $96.2_{\pm 0.1}$    \\ 
            &   SAM   &  $96.5_{\pm 0.1}$ & $82.0_{\pm 0.2}$ & $95.6_{\pm 0.1}$ & $96.4_{\pm 0.1}$  \\ 
            &   \multicolumn{1}{l}{Eigen-SAM}  &  $\textbf{96.8}_{\pm 0.1}$  & $\textbf{82.8}_{\pm 0.1}$ & $\textbf{95.9}_{\pm 0.1}$ & $\textbf{96.7}_{\pm 0.1}$   \\
             \bottomrule
\end{tabular}
}
\end{table}

\subsection{Finetuning}
We evaluated performance by fine-tuning a ViT-B-16 model \cite{dosovitskiy2020image} pre-trained on ImageNet for CIFAR-10 and CIFAR-100. We used the checkpoint provided by PyTorch’s official repository\footnote{https://pytorch.org/vision/main/models/generated/torchvision.models.vit\_b\_16.html}. For SAM and Eigen-SAM, we used an initial learning rate of 0.01 and trained for 4k steps, while for SGD, we trained for 8k steps. Table \ref{tab:finetune} shows the test accuracy, where Eigen-SAM consistently outperforms the baselines.

\begin{table}[h]
\centering
\caption{Test accuracy for fine-tuning ViT-B-16 pretrained on ImageNet-1K on CIFAR-10 and CIFAR-100.}
\label{tab:finetune}
\fontsize{10}{12}
\begin{tabular}{llcc}
Architecture     &   Method  & CIFAR-10 & CIFAR-100  \\ \midrule \midrule

ViT-B-16    &   SGD     &    $98.0_{\pm 0.1}$        &  $88.6_{\pm 0.1}$     \\
            &   SAM     &    $98.4_{\pm <0.1}$      & $89.5_{\pm 0.1}$    \\
            &   \multicolumn{1}{l}{Eigen-SAM}  & $\textbf{98.5}_{\pm 0.1}$ & $\textbf{89.8}_{\pm 0.1}$    \\
             \bottomrule
\end{tabular}
\end{table}

\subsection{Sensitivity analysis}
We investigated the impact of varying the hyperparameter \(\alpha\) on the test accuracy of Eigen-SAM. We conducted experiments on ResNet-18 with CIFAR-100, testing a range of \(\alpha\) values, as shown in Figure \ref{fig:4}. We observed that the test accuracy peaks at \(\alpha=0.2\), yielding a 0.9\% improvement in test accuracy compared to \(\alpha=0\), which corresponds to the standard SAM. These results suggest that \(\alpha\) is a robust hyperparameter, as its variations do not cause significant performance fluctuations, while consistently enhancing performance.  

In Table \ref{rb:1} and Table \ref{rb:2} in Appendix \ref{rb}, we demonstrate how larger values of \(p\) affect generalization performance and observe that setting \(p\) to 1000 (resulting in less than \(1\%\) additional overhead) retains most of the performance gains. In Figure \ref{rb:3} in Appendix \ref{rb}, we demonstrate the convergence speed of Algorithm \ref{algorithm:1}, which typically requires only a few steps to converge.

\begin{figure}[htbp]
     \centering
     \begin{subfigure}[b]{0.49\textwidth}
         \centering
         \includegraphics[width=\textwidth]{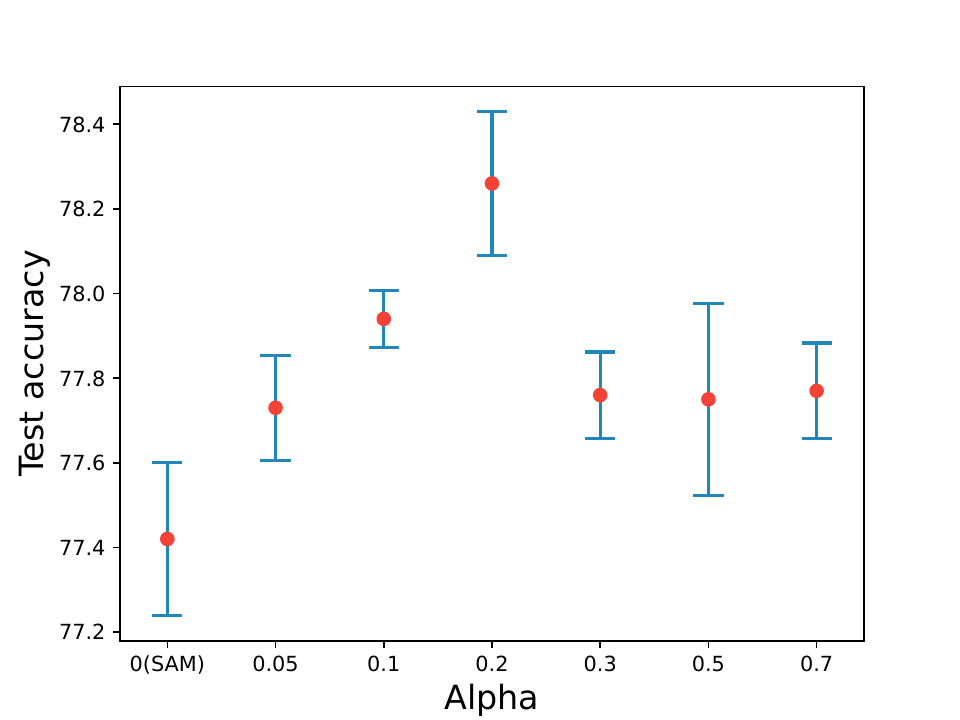}
         \caption{Sensitivity Analysis of \(\alpha\)}
     \end{subfigure}
     \hfill
     \begin{subfigure}[b]{0.49\textwidth}
         \centering
         \includegraphics[width=\textwidth]{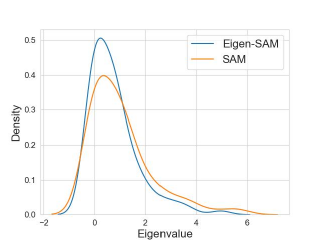}
         \caption{Spectrum of Hessian}
     \end{subfigure}
        \caption{Left: Sensitivity analysis of \(\alpha\); the blue lines indicate the confidence interval. Right: Spectrum of the Hessian at the end of training.}
        \label{fig:4}
        \vspace{-4mm}
\end{figure}

\subsection{Hessian spectrum}
Figure \ref{fig:4} shows the Hessian spectrum at the 200th epoch for ResNet-18 trained on CIFAR-100 using SAM and Eigen-SAM. We observe that the model trained with Eigen-SAM has both a smaller top eigenvalue and trace, with more eigenvalues concentrated near zero. This observation aligns with our motivation for proposing Eigen-SAM and explains why Eigen-SAM generalizes better than SAM.

\section{Conclusion}
In this work, we analyzed the training dynamics of SAM using a third-order SDE, identifying the alignment between the perturbation vector and the top eigenvector as a crucial factor for effective sharpness regularization. However, our empirical analysis showed that this alignment is often poor in practice. Building on our theoretical framework and experimental insights, we proposed Eigen-SAM, an algorithm that intermittently estimates the top eigenvector of the Hessian matrix and incorporates its component orthogonal to the gradient into the perturbation, explicitly regularizing the top Hessian eigenvalue. Extensive experiments demonstrated that our third-order SDE yields a smaller approximation error than previous models and that Eigen-SAM achieves state-of-the-art performance across various tasks, validating both its accuracy and effectiveness.

\section*{Acknowledgements}
We would like to thank Zehang Deng for helpful discussions. We also appreciate the constructive feedback provided by the anonymous reviewers. This work was supported by ARC DP23 grant DP230101176 and by the Air Force Office of Scientific Research under award number FA2386-23-1-4044. 

\clearpage
\medskip
\bibliographystyle{apalike}
\bibliography{ref}

\clearpage
\appendix

\begin{center}
    \textbf{\large Appendix}
\end{center}
\section{Additional empirical evidence}\label{empirical}
In this section, we present empirical evidence highlighting discrepancies between existing theories and practical observations. We demonstrate this by designing several "counterexample" algorithms and comparing their performance with SAM.

First, the theory developed by \cite{Wen2022} suggests that SAM’s implicit regularization primarily occurs near the minima manifold, where \(\nabla f(x) \approx 0\) and the quadratic term dominates the perturbed term. To test this, we designed a counterexample algorithm called Reverse-SAM, which applies SAM’s ascent step using a negative normalized gradient, i.e., \(\epsilon = -\frac{\nabla f_\gamma(x)}{\|\nabla f_\gamma(x)\|}\). If this theory holds in practice, then Reverse-SAM should perform similarly to SAM, as the sign of the perturbation vector would not affect the implicit bias induced by the quadratic term.

The second counterexample algorithm we consider is Explicit Gradient Regularization (EGR), a regularization method with a long history, tracing back to the works of \cite{barrett2020implicit} and \cite{drucker1991double}. The second-order SDE proposed by \cite{compagnoni2023sde} suggests that SAM’s implicit bias is equivalent to optimizing the objective \( f(x) + \rho \|\nabla f(x)\| \). Therefore, if this second-order SDE model is accurate in practice, EGR with a regularization coefficient equal to \( \rho \) should exhibit performance comparable to SAM.

We trained ResNet-18 on CIFAR-10 to compare the performance of these two counterexample algorithms with SAM. Figure \ref{counter} shows the training and test losses. Additionally, the test accuracies for SAM, EGR, and Reverse-SAM were \(95.5\% \pm 0.1\%\), \(95.0\% \pm 0.2\%\), and \(94.4\% \pm 0.2\%\), respectively. Both counterexample algorithms show noticeable performance gaps from SAM across all metrics, highlighting the limitations of existing theories in explaining SAM’s practical outcomes.

\begin{figure}[htbp]
    \centering
    \begin{subfigure}[b]{0.49\textwidth}
        \centering
        \includegraphics[width=\textwidth]{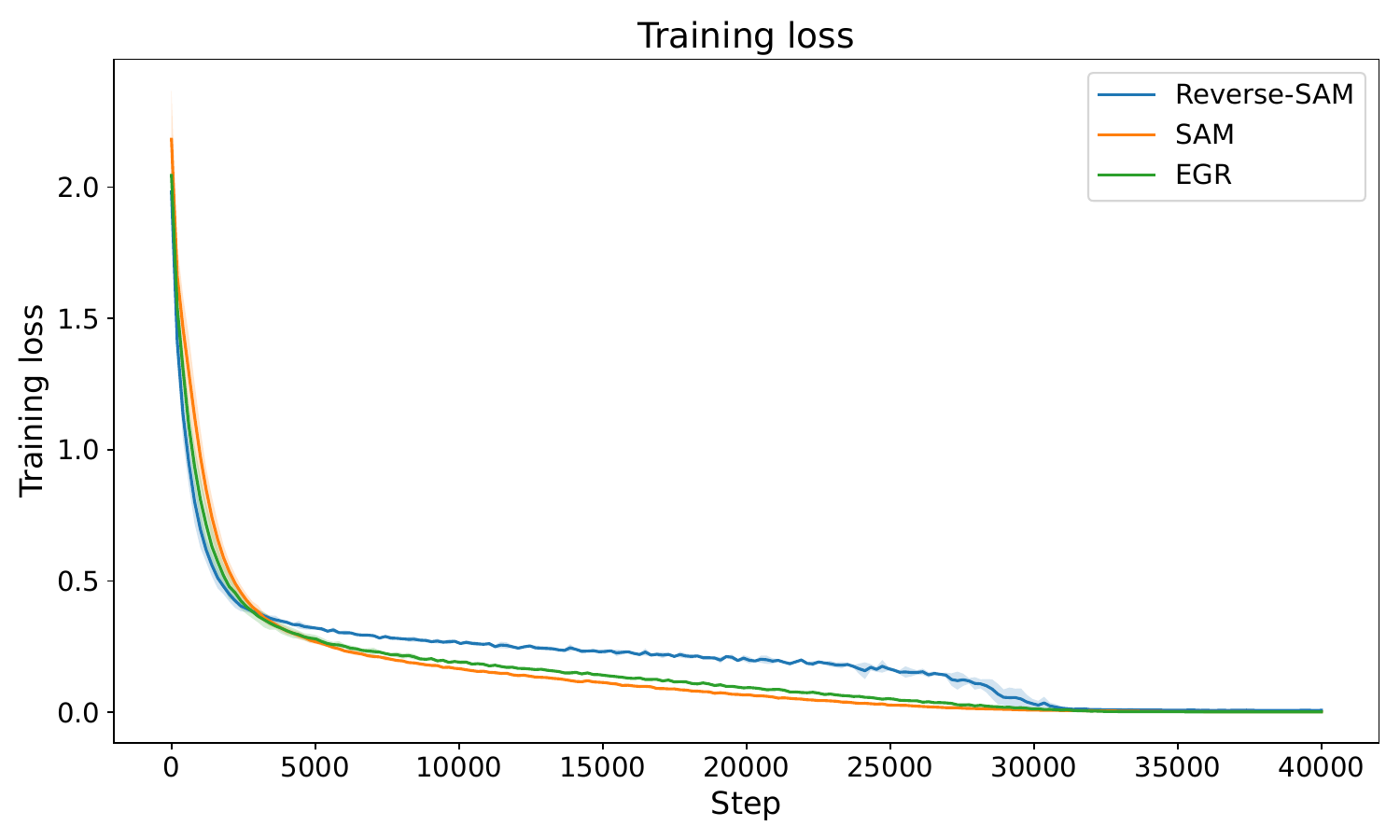}
        \caption{Training Loss}
        \label{fig:training_loss}
    \end{subfigure}
    \hfill
    \begin{subfigure}[b]{0.49\textwidth}
        \centering
        \includegraphics[width=\textwidth]{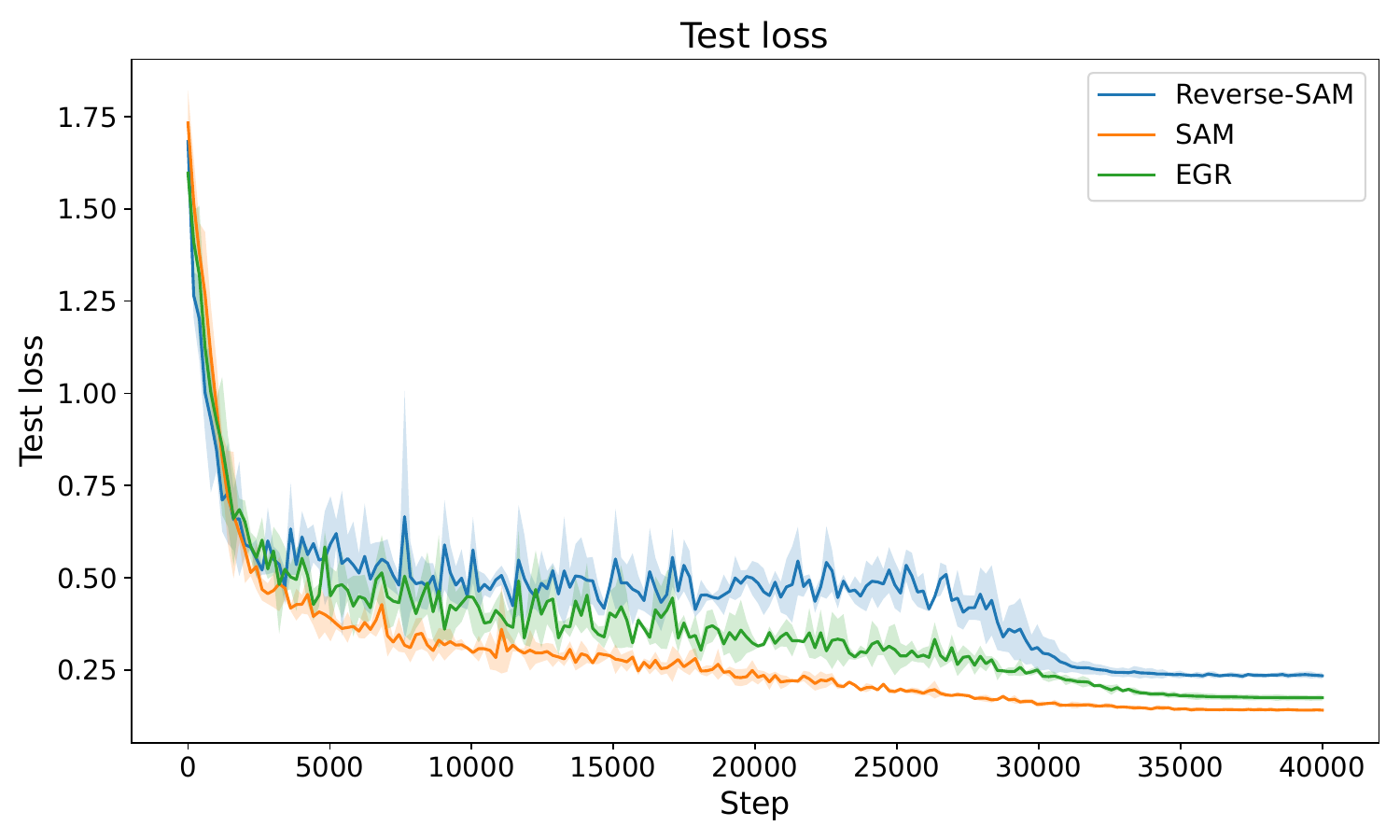}
        \caption{Test Loss}
        \label{fig:test_loss}
    \end{subfigure}
    \caption{Comparison of training loss and test loss metrics across algorithms.}
    \label{counter}
\end{figure}

\section{Proofs for SDEs (Section \ref{section:4})} \label{sec:proofs}

We follow the notation of multi-index, which is commonly used in SDE literature:
\begin{itemize}
    \item A multi-index is an n-tuple of non-negative integers \(\alpha=(\alpha_1,\alpha_2...,\alpha_n)\)
    \item \(|\alpha|:=\alpha_1+\alpha_2+...+\alpha_n\)
    \item \(\alpha ! :=\alpha_1!\alpha_2!...\alpha_n!\)
    \item For \(x=(x_1,x_2...,x_n) \in \mathbb{R}^n\), \(x^{\alpha}:=x_1^{\alpha_1}x_2^{\alpha_2}...x_n^{\alpha_n}\)
    \item For a multi-index $\beta$, $\partial^{|\beta|}_{\beta} f(x) := \frac{\partial^{|\beta|}}{\partial x_1^{\beta_1} \partial x_2^{\beta_2} \dots \partial x_n^{\beta_n}} f(x)$
\end{itemize}
We denote the partial derivative with respect to \(x_i\) by \(\partial_{e_i}\).

Following the SDE framework by \cite{mil1986weak,li2017stochastic,compagnoni2023sde}, we have the following definitions and assumptions:

\begin{definition} \label{def:function-sets}
 Let $G$ denote the set of continuous functions $\mathbb{R}^d \rightarrow \mathbb{R}$  of at most polynomial growth, i.e. $g \in G$ if there exists positive integers $\kappa_1,\kappa_2>0$ such that
\begin{align*}|g(x)|\leq \kappa_1(1+\|x\|^{2\kappa_2})
\end{align*}
for all \(x \in \mathbb{R}^d\). Moreover, for each integer \(\alpha \geq 1\) we denote by \(G^{\alpha}\) the set of \(\alpha\)-times continuously differentiable functions \(\mathbb{R}^d \rightarrow \mathbb{R}\) which, together with its partial derivatives up to and
including order \(\alpha\), belong to \(G\).
\end{definition}
This definition originates from the field of numerical analysis of SDEs \citep{mil1986weak}. In the case of \(g(x)=||x||^j\), the bound restricts the difference between the \(j\)-th moments of the discrete process and those of the continuous process. We write \(\mathcal{O}(\rho^{\alpha_1}\eta^{\alpha_2})\) to denote that there exists a function \(K \in G\) independent with \(\rho,\ \eta\), such that the error terms are bounded by \(K\rho^{\alpha_1}\eta^{\alpha_2}\).

\begin{assumption}
\label{assumption:1} 
Assume that the following conditions on $f$, $f_i$ and their gradients are satisfied:
\begin{itemize}
    \item $\nabla f$, $\nabla f_i$ satisfy a Lipschitz condition: there exists $L > 0$ such that
    \[
    \|\nabla f(x) - \nabla f(y)\| + \sum_{i=1}^n \|\nabla f_i(x) - \nabla f_i(y)\| \leq L \|x - y\|;
    \]
    \item $f$, $f_i$ and its partial derivatives up to order 7 belong to $G$;
    \item $\nabla f$, $\nabla f_i$ satisfy a growth condition: there exists $M > 0$ such that
    \[
    \|\nabla f(x)\| + \sum_{i=1}^n \|\nabla f_i(x)\| \leq M (1 + \|x\|);
    \]
    \item $g$ and its partial derivatives up to order 6 belong to $G$;
\end{itemize}
\end{assumption}
See \cite{li2017stochastic} for a detailed discussion on Assumption \ref{assumption:1}.

To prove the Corollary \ref{Corollary:3}, we need an additional assumption to ensure that the top eigenvalue is differentiable. Note that this assumption is common in recent works \citep{damian2022self,Wen2022} and easy to satisfy.
\begin{assumption}\label{assumption:2} {\rm(Eigenvalues gap)} For all \(k>0\), \(\lambda_1(\nabla^2 f(x_k))>\lambda_2(\nabla^2 f(x_k))\), and \(\lambda_1(\nabla^2 f(X_{k\eta}))>\lambda_2(\nabla^2 f(X_{k\eta}))\).
\end{assumption}

In the proof of our main theorem, we will utilize the following two auxiliary lemmas.
\begin{lemma} \label{lemma:2}
    {\rm(Lemma 1 \citep{li2017stochastic})}
    Let \(0<\eta<1\). Consider a stochastic process \(\{X_t: t>0\}\) satisfying the SDE
\[dX_t=b(X_t)dt+\eta^{\frac{1}{2}}\sigma(X_t)dW_t\]
with \(X_0=x \in \mathbb{R}^d\) and \(b,\sigma\) together with their derivatives belong to G. Define the one-step difference \(\Delta=X_{\eta}-x\), then we have
\begin{align}
    & \mathbb{E} \Delta_i=b_i \eta+\frac{1}{2}\left[\sum_{j=1}^d b_j \partial_{e_j} b_i\right] \eta^2+\mathcal{O}\left(\eta^3\right) \quad \forall i=1, \ldots, d;\\
& \mathbb{E} \Delta_i \Delta_j=\left[b_i b_j+\sigma \sigma_{(i j)}^T\right] \eta^2+\mathcal{O}\left(\eta^3\right) \quad \forall i, j=1, \ldots, d;\\
& \mathbb{E} \prod_{j=1}^s \Delta_{\left(i_j\right)}=\mathcal{O}\left(\eta^3\right)\quad  \forall s \geq 3, i_j=1, \ldots, d.
\end{align}
all the functions are evaluated at \(x\).
\end{lemma}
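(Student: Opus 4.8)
The plan is to derive all three identities from a single Taylor-in-time expansion of the transition semigroup of the SDE. Fix $x\in\mathbb{R}^d$; for a test function $\phi\in G^6$ let $u(t,y):=\mathbb{E}\big[\phi(X_t)\mid X_0=y\big]$. Under Assumption~\ref{assumption:1} the coefficients $b,\sigma$ are Lipschitz and smooth with derivatives in $G$, so the SDE $dX_t=b(X_t)\,dt+\eta^{1/2}\sigma(X_t)\,dW_t$ has a unique strong solution obeying moment bounds $\mathbb{E}\sup_{s\le\eta}\|X_s\|^{2m}\le C_m(1+\|y\|^{2m})$, and $u$ is smooth in $(t,y)$ and solves the Kolmogorov backward equation $\partial_t u=\mathcal{L}u$, $u(0,\cdot)=\phi$, with the $\eta$-dependent generator $\mathcal{L}=\mathcal{L}_0+\eta\mathcal{L}_1$, where $\mathcal{L}_0:=\sum_j b_j\,\partial_{e_j}$ and $\mathcal{L}_1:=\tfrac12\sum_{j,k}(\sigma\sigma^\top)_{jk}\,\partial_{e_j}\partial_{e_k}$. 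The factor $\eta$ in front of $\mathcal{L}_1$ is essential: it is what places the leading diffusion contribution at order $\eta^2$ rather than order $\eta$, consistently with the second identity.

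Next, Taylor-expand $t\mapsto u(t,x)$ at $t=0$. Differentiating $\partial_t u=\mathcal{L}u$ twice more gives, for some $\xi\in(0,\eta)$,
\[
u(\eta,x)=\phi(x)+\eta\,\mathcal{L}\phi(x)+\tfrac{\eta^2}{2}\,\mathcal{L}^2\phi(x)+\tfrac{\eta^3}{6}\,\partial_t^3u(\xi,x),
\]
and substituting $\mathcal{L}=\mathcal{L}_0+\eta\mathcal{L}_1$ and collecting powers of $\eta$ yields
\[
u(\eta,x)=\phi(x)+\eta\,\mathcal{L}_0\phi(x)+\eta^2\big(\mathcal{L}_1\phi(x)+\tfrac12\mathcal{L}_0^2\phi(x)\big)+\mathcal{O}(\eta^3).
\]
Since $\mathbb{E}\prod_{j}\Delta_{(i_j)}=u(\eta,x)$ for $\phi(y)=\prod_j(y-x)_{(i_j)}$, it remains to evaluate the first three coefficients on such monomials, using the elementary fact that $\mathcal{L}_0$ lowers the order of vanishing of a monomial at $x$ by exactly $1$ and $\mathcal{L}_1$ by exactly $2$. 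For $\phi(y)=(y-x)_i$: $\phi(x)=0$, $\mathcal{L}_0\phi\equiv b_i$ so $\mathcal{L}_0\phi(x)=b_i(x)$, $\mathcal{L}_1\phi\equiv0$, and $\mathcal{L}_0^2\phi(x)=\sum_j b_j\partial_{e_j}b_i(x)$, which is the first identity. For $\phi(y)=(y-x)_i(y-x)_j$: $\phi(x)=0$, $\mathcal{L}_0\phi(x)=0$, $\mathcal{L}_1\phi(x)=(\sigma\sigma^\top)_{ij}(x)$ by symmetry of $\sigma\sigma^\top$, and $\mathcal{L}_0^2\phi(x)=2\,b_i(x)b_j(x)$, giving $\mathbb{E}\Delta_i\Delta_j=\big(b_ib_j+(\sigma\sigma^\top)_{ij}\big)(x)\,\eta^2+\mathcal{O}(\eta^3)$, the second identity. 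For $\phi(y)=\prod_{j=1}^s(y-x)_{(i_j)}$ with $s\ge3$ the monomial vanishes to order $s\ge3$ at $x$, so $\mathcal{L}_0\phi(x)=\mathcal{L}_1\phi(x)=\mathcal{L}_0^2\phi(x)=0$ and only the $\mathcal{O}(\eta^3)$ remainder survives, which is the third identity.

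The one genuinely technical point — and what I expect to be the main obstacle — is to show the remainder $\tfrac{\eta^3}{6}\partial_t^3u(\xi,x)$ is $\mathcal{O}(\eta^3)$ in the paper's sense, i.e.\ bounded by $K(x)\,\eta^3$ with $K\in G$ uniformly for $\eta\in(0,1)$. This reduces to the classical regularity fact that, under Assumption~\ref{assumption:1}, $u(t,\cdot)$ and its spatial derivatives stay in $G$ and $\sup_{t\in[0,1]}|\partial_t^k u(t,\cdot)|\in G$, with bounds independent of $\eta\in(0,1)$; this follows by differentiating the stochastic flow in its initial condition and combining with the moment bounds above, which is exactly why Assumption~\ref{assumption:1} requires $f_i$, $g$ and their partial derivatives up to high order to lie in $G$. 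If one prefers to bypass backward-PDE regularity, an equivalent route is a direct iterated It\^o--Taylor expansion: write $\Delta_i=\int_0^\eta b_i(X_s)\,ds+\eta^{1/2}\int_0^\eta\sum_k\sigma_{ik}(X_s)\,dW_s^k$, expand $b_i(X_s)$ and $\sigma_{ik}(X_s)$ again by It\^o's formula around $x$, take expectations so the martingale parts drop, and bound the residual iterated integrals via the Burkholder--Davis--Gundy inequality and the moment bounds; there the only subtle term is the cross term $\mathbb{E}\big[\int_0^\eta b_i(X_s)\,ds\cdot\eta^{1/2}\int_0^\eta\sigma_{jk}(X_s)\,dW_s^k\big]$, where one must observe that freezing the coefficients at $x$ produces a mean-zero leading contribution, so this term is $\mathcal{O}(\eta^3)$ rather than $\mathcal{O}(\eta^2)$ and does not spoil the second identity.
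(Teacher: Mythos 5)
Your derivation is correct, but note that the paper does not prove this statement at all: it is imported verbatim as Lemma~1 of \cite{li2017stochastic}, so there is no in-paper proof to match against. Your semigroup route — writing $u(t,x)=\mathbb{E}[\phi(X_t)\mid X_0=x]$, using the backward equation with the $\eta$-dependent generator $\mathcal{L}=\mathcal{L}_0+\eta\mathcal{L}_1$, Taylor-expanding in $t$, and evaluating on the monomials $(y-x)_i$, $(y-x)_i(y-x)_j$, $\prod_j(y-x)_{(i_j)}$ — is essentially equivalent to the It\^o--Taylor (weak Taylor) expansion used in the cited source, and your bookkeeping of the $\eta$-scaling of the diffusion (which pushes $\sigma\sigma^\top$ to order $\eta^2$) and the evaluations $\mathcal{L}_0\phi(x)=b_i$, $\tfrac12\mathcal{L}_0^2\phi(x)=\tfrac12\sum_j b_j\partial_{e_j}b_i$, $\mathcal{L}_1\phi(x)=(\sigma\sigma^\top)_{ij}$, $\tfrac12\mathcal{L}_0^2\phi(x)=b_ib_j$ are all right, as is the vanishing-order argument for $s\ge 3$ (where ``lowers the order by exactly $1$/$2$'' should read ``by at least'', which is all you use). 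The one caveat is that the part you defer — showing $\partial_t^3u(\xi,x)$ is bounded by a function in $G$ uniformly over $\eta\in(0,1)$, i.e.\ the moment/regularity estimates for the flow — is precisely the technical content of the cited lemma; your sketch (differentiating the flow in its initial condition, moment bounds, or alternatively the direct iterated It\^o expansion with BDG and the correct $\mathcal{O}(\eta^3)$ handling of the drift--diffusion cross term) is the standard and correct way to close it, so the proposal is sound as a proof outline even though that step is not carried out in full detail.
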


\begin{lemma}\label{theorem:1}
    {\rm (Theorem 2 and Lemma 5 \citep{mil1986weak} )}Under Assumption \ref{assumption:1}, we define the one-step difference for the stochastic process \(\Delta=X_{\eta}-x\). if in addition there exists \(K_1,K_2,K_3,K_4 \in G\) so that
\begin{align}
& \left|\mathbb{E} \Delta_i-\mathbb{E} \bar{\Delta}_i\right| \leq K_1(x) \eta^2, \quad \forall i=1, \ldots, d ; \\
& \left|\mathbb{E} \Delta_i \Delta_j-\mathbb{E} \bar{\Delta}_i \bar{\Delta}_j\right| \leq K_2(x) \eta^2, \quad \forall i, j=1, \ldots, d ; \\
& \left|\mathbb{E} \prod_{j=1}^s \Delta_{i_j}-\mathbb{E} \prod_{j=1}^s \bar{\Delta}_{i_j}\right| \leq K_3(x) \eta^2, \quad \forall s \geq 3, \quad \forall i_j \in\{1, \ldots, d\} ; \\
& \mathbb{E} \prod_{j=1}^3\left|\bar{\Delta}_{i_j}\right| \leq K_4(x) \eta^2, \quad \forall i_j \in\{1, \ldots, d\} .
\end{align}
Then, for each \(g \in G\), there exists a constant \(C\) such that
\[\max_{k=0,1,...N}\left|\mathbb{E} g(x_k)-\mathbb{E} g(X_{k\eta})\right|\leq C \eta\]
\end{lemma}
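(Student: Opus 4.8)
The plan is to reproduce the classical Milstein weak-convergence argument: turn the four one-step moment-matching hypotheses into a bound on the \emph{global} weak error by propagating local errors through the backward Kolmogorov value function of the SDE.

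First I would introduce the value function $u(t,x) := \mathbb{E}\!\left[g\big(X^{t,x}_T\big)\right]$, where $X^{t,x}_s$ solves the SDE on $[t,T]$ with $X^{t,x}_t = x$. Since the coefficients $b,\sigma$ are autonomous and, by Assumption \ref{assumption:1}, smooth with all partial derivatives in $G$, and $g$ together with its derivatives up to order $6$ lies in $G$, standard SDE theory (differentiation of the stochastic flow with respect to the initial condition, plus moment estimates) yields that $u(t,\cdot)$, together with its spatial derivatives up to order $4$, belongs to $G$, with the polynomial-growth constants bounded uniformly in $t\in[0,T]$; moreover $\partial_t u + \mathcal{L}u = 0$ and $u(T,\cdot)=g$, where $\mathcal{L}$ is the SDE generator. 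I would also record the Markov identity $u(t,\cdot) = P_\eta\,u(t+\eta,\cdot)$, where $P_\eta$ is the diffusion's transition operator over a time-$\eta$ step, and write $\hat P_\eta$ for the one-step transition operator of the discrete scheme, $(\hat P_\eta\phi)(x)=\mathbb{E}[\phi(x_1)\mid x_0=x]$.

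Second, the one-step error. Fix $x$, put $\phi := u(t+\eta,\cdot)$, and Taylor-expand $\phi(x+\Delta)$ and $\phi(x+\bar\Delta)$ about $x$ to third order with integral remainder, where $\Delta = x_1-x$ and $\bar\Delta = X^{x}_\eta - x$ are the one-step increments of the scheme and of the SDE. Subtracting,
\[
\big((\hat P_\eta - P_\eta)\phi\big)(x) = \sum_{1\le|\alpha|\le 3}\frac{1}{\alpha!}\,\partial^\alpha\phi(x)\big(\mathbb{E}\Delta^\alpha - \mathbb{E}\bar\Delta^\alpha\big) + \mathbb{E}R(\Delta) - \mathbb{E}R(\bar\Delta).
\]
The $|\alpha|\in\{1,2\}$ terms are $O(\eta^2)$ times a polynomial in $x$ by hypotheses (i)--(ii); the $|\alpha|=3$ terms are $O(\eta^2)$ by hypothesis (iii); and the remainders are dominated by $\mathbb{E}|\Delta|^4$ and $\mathbb{E}|\bar\Delta|^4$, both $O(\eta^2)$ --- for $\bar\Delta$ via Lemma \ref{lemma:2} together with hypothesis (iv), and for $\Delta$ by combining hypotheses (ii)--(iii) with Lemma \ref{lemma:2} and a Cauchy--Schwarz bound on the nonnegative even-degree moments. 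Using the uniform-in-$t$ polynomial bounds on the derivatives of $u$, this gives $\big|\big((\hat P_\eta - P_\eta)u(t+\eta,\cdot)\big)(x)\big| \le K(x)\,\eta^2$ for some $K\in G$ independent of $\eta$ and $t$.

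Third, assemble the global estimate by telescoping. Writing $\mathbb{E}g(x_N) - \mathbb{E}g(X_T) = \mathbb{E}[u(T,x_N)] - u(0,x_0)$ and inserting the intermediate terms $\mathbb{E}[u(k\eta,x_k)]$, the Markov identity converts each consecutive difference into a one-step error at $x_k$, so
\[
\mathbb{E}g(x_N) - \mathbb{E}g(X_T) = \sum_{k=0}^{N-1}\mathbb{E}\!\left[\big((\hat P_\eta - P_\eta)\,u((k{+}1)\eta,\cdot)\big)(x_k)\right],
\]
and each summand is bounded by $\mathbb{E}[K(x_k)]\,\eta^2$. It remains to bound $\sup_{0\le k\le N}\mathbb{E}[K(x_k)]$ uniformly in $\eta$; since $K\in G$, this reduces to uniform moment bounds $\sup_{k\le N}\mathbb{E}\|x_k\|^{2q}\le C_q$, which follow from the Lipschitz and linear-growth conditions of Assumption \ref{assumption:1} by a discrete Gronwall argument on the scheme (and analogously $\sup_{t\le T}\mathbb{E}\|X_t\|^{2q}<\infty$ for the SDE). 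The $N=\lfloor T/\eta\rfloor$ summands then give $|\mathbb{E}g(x_N)-\mathbb{E}g(X_T)|\le N\cdot C'\eta^2\le C\eta$, and re-running the same bound with $T$ replaced by any $k\eta\le T$ --- with the same uniform constants --- yields the stated $\max_{k\le N}$ form.

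The main obstacle is the regularity claim in the first step: that $u(t,x)$ is smooth enough in $x$, with polynomial-growth bounds on its derivatives that are uniform over $t\in[0,T]$. This is precisely where the high-order smoothness hypotheses on $f_\gamma$ and $g$ in Assumption \ref{assumption:1} are consumed, and where one either invokes a Feynman--Kac / parabolic-regularity theorem or differentiates the SDE flow under the expectation and controls the variational processes by Burkholder--Davis--Gundy and Gronwall; I would cite \cite{mil1986weak} (and the presentation in \cite{li2017stochastic}) for the precise statement rather than reprove it. Everything else --- the Taylor bookkeeping in step two and the discrete-Gronwall moment bound in step three --- is routine once that regularity is in hand.
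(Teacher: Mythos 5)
This lemma is not proved in the paper at all: it is imported verbatim from the cited literature (Theorem 2 and Lemma 5 of \cite{mil1986weak}, in the form restated by \cite{li2017stochastic}), and the paper's role for it is purely as a black box used to establish Theorem \ref{Theorem:2}. Your proposal reconstructs exactly the classical argument behind that citation --- backward Kolmogorov value function $u(t,x)=\mathbb{E}[g(X_T^{t,x})]$ with polynomially growing spatial derivatives uniform in $t$, a one-step weak-error estimate obtained by Taylor expansion and the four moment-matching hypotheses, and a telescoping sum over the $N=\lfloor T/\eta\rfloor$ steps controlled by uniform moment bounds on the scheme via discrete Gronwall --- so it is the same route as the source the paper relies on, not a new one. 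Two small points of bookkeeping: you have swapped the paper's convention (here $\Delta=X_\eta-x$ is the continuous increment and $\bar\Delta=x_1-x$ the discrete one, and correspondingly Lemma \ref{lemma:2} controls the SDE increment while Lemma \ref{lemma:1} controls the scheme), and when bounding the Taylor remainders you should note that the smallness of the increment moments comes from the $\sqrt{\eta}$-scaled diffusion in this SME framework (which is why $\mathbb{E}\|\Delta\|^4=\mathcal{O}(\eta^2)$ and hypothesis (iv) suffices); with those conventions straightened out, the sketch is the standard proof and the only genuinely nontrivial ingredient, the regularity of $u$, is correctly identified and correctly deferred to \cite{mil1986weak}.
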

Next, we will prove a lemma that controls the moments of the discrete process for SAM:
\begin{lemma}\label{lemma:1}
    Under Assumption \ref{assumption:1}, let \(0 < \eta < 1\). We define:
    \[
    \partial_{e_i} \tilde{f}^{SAM}(x) := \partial_{e_i} f(x) + \rho \mathbb{E}\left[\frac{\sum_j \partial_{e_i+e_j}^2 f_\gamma(x) \partial_{e_j} f_\gamma(x)}{\|\nabla f_\gamma(x)\|}\right] + \frac{\rho^2}{2}\mathbb{E}\left[\frac{\sum_{jk} \partial_{e_i+e_j+e_k}^3 f_\gamma(x) \partial_{e_j} f_\gamma(x) \partial_{e_k} f_\gamma(x)}{\|\nabla f_\gamma(x)\|^2}\right].
    \]
    In addition, we define the one-step difference for the discrete-time algorithm as \(\bar{\Delta} = x_1 - x\). Then we have:
    \begin{align}
        &1. \quad \mathbb{E} \bar{\Delta}_i = -\partial_{e_i} \tilde{f}^{SAM}(x) \eta + \mathcal{O}\left(\eta \rho^3\right), \quad \forall i=1, \ldots, d; \\
        &2. \quad \mathbb{E} \bar{\Delta}_i \bar{\Delta}_j = \partial_{e_i} \tilde{f}^{SAM}(x) \partial_{e_j} \tilde{f}^{SAM}(x) \eta^2 + \Sigma_{(ij)}^{SAM} \eta^2 + \mathcal{O}\left(\eta^2 \rho^3\right), \quad \forall i, j = 1, \ldots, d; \\
        &3. \quad \mathbb{E} \prod_{j=1}^s \bar{\Delta}_{i_j} = \mathcal{O}\left(\eta^3\right), \quad \forall s \geq 3, \quad i_j \in \{1, \ldots, d\}.
    \end{align}
    All functions are evaluated at \(x\).
\end{lemma}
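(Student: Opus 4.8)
The plan is to reduce all three moment estimates to a single third-order Taylor expansion of $\nabla f_\gamma$. The one-step difference of the discrete iteration is $\bar\Delta = x_1 - x = -\eta\,\nabla f_\gamma\big(x+\rho\,\epsilon_\gamma^{SAM}\big)$ for a single mini-batch $\gamma$, where $\epsilon_\gamma^{SAM}=\nabla f_\gamma(x)/\|\nabla f_\gamma(x)\|$ is a \emph{unit} vector held fixed during the step (we work on the event $\nabla f_\gamma(x)\neq 0$, implicit in the definition of SAM). Expanding $\partial_{e_i} f_\gamma$ about $x$ with increment $\rho\,\epsilon_\gamma^{SAM}$ to third order, I would write
\begin{align*}
\partial_{e_i} f_\gamma\big(x+\rho\,\epsilon_\gamma^{SAM}\big) &= T_i^\gamma(x) + R_i^\gamma(x), \\
T_i^\gamma(x) &:= \partial_{e_i}f_\gamma(x) + \rho\,\frac{\sum_j \partial_{e_i+e_j}^2 f_\gamma(x)\,\partial_{e_j}f_\gamma(x)}{\|\nabla f_\gamma(x)\|} \\
&\qquad\quad + \frac{\rho^2}{2}\,\frac{\sum_{jk}\partial_{e_i+e_j+e_k}^3 f_\gamma(x)\,\partial_{e_j}f_\gamma(x)\,\partial_{e_k}f_\gamma(x)}{\|\nabla f_\gamma(x)\|^2},
\end{align*}
where $R_i^\gamma$ is the Lagrange remainder, equal to $\tfrac{\rho^3}{6}$ times $\nabla^4 f_\gamma$ evaluated at a point of the segment $[x,\,x+\rho\,\epsilon_\gamma^{SAM}]$ and contracted with three copies of the unit vector $\epsilon_\gamma^{SAM}$. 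By the very definition of $\partial_{e_i}\tilde f^{SAM}$ one has $\mathbb{E}_\gamma T_i^\gamma(x)=\partial_{e_i}\tilde f^{SAM}(x)$, so all that remains is to control $R_i^\gamma$ and the cross terms it produces.

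For statement 1 I would take $\mathbb{E}_\gamma$ of $\bar\Delta_i=-\eta(T_i^\gamma+R_i^\gamma)$, obtaining $-\eta\,\partial_{e_i}\tilde f^{SAM}(x)-\eta\,\mathbb{E}_\gamma R_i^\gamma$. Since $\|\epsilon_\gamma^{SAM}\|=1$ and the segment stays within distance $\rho\le 1$ of $x$, the triple contraction of $\nabla^4 f_\gamma$ is bounded by $\|\nabla^4 f_\gamma\|$ on a unit ball around $x$, which lies in $G$ by Assumption \ref{assumption:1} (partial derivatives of $f_\gamma$ up to order $7$ belong to $G$); averaging over the finitely many mini-batches gives $\mathbb{E}_\gamma R_i^\gamma=\mathcal{O}(\rho^3)$ and hence $\mathbb{E}\bar\Delta_i=-\eta\,\partial_{e_i}\tilde f^{SAM}(x)+\mathcal{O}(\eta\rho^3)$.

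For statement 2, because a single step uses one $\gamma$, $\bar\Delta_i\bar\Delta_j=\eta^2(T_i^\gamma+R_i^\gamma)(T_j^\gamma+R_j^\gamma)$; every term containing an $R$ is $\mathcal{O}(\rho^3)$ since $T_i^\gamma$ is $\mathcal{O}(1)$ in $G$ (by the same growth argument, now using derivatives up to order $3$), so $\mathbb{E}\bar\Delta_i\bar\Delta_j=\eta^2\mathbb{E}_\gamma[T_i^\gamma T_j^\gamma]+\mathcal{O}(\eta^2\rho^3)$. Writing $T_i^\gamma=g_i^\gamma+\rho h_i^\gamma+\tfrac{\rho^2}{2}k_i^\gamma$ for the three bracketed pieces and expanding the product to order $\rho^2$, I would subtract $(\mathbb{E}_\gamma T_i^\gamma)(\mathbb{E}_\gamma T_j^\gamma)=\partial_{e_i}\tilde f^{SAM}(x)\,\partial_{e_j}\tilde f^{SAM}(x)$; matching powers of $\rho$, the difference is $\mathrm{Cov}_\gamma(g_i,g_j)$ at order $\rho^0$, $\mathrm{Cov}_\gamma(g_i,h_j)+\mathrm{Cov}_\gamma(h_i,g_j)$ at order $\rho^1$, and $\mathrm{Cov}_\gamma(h_i,h_j)+\tfrac12\mathrm{Cov}_\gamma(g_i,k_j)+\tfrac12\mathrm{Cov}_\gamma(k_i,g_j)$ at order $\rho^2$, which are precisely the entries of $\Sigma^{1,1}$, of $\Sigma^{1,2}+(\Sigma^{1,2})^\top$, and of $\Sigma^{2,2}+\tfrac12(\Sigma^{1,3}+(\Sigma^{1,3})^\top)$; summing gives $\mathbb{E}_\gamma[T_i^\gamma T_j^\gamma]-(\mathbb{E}_\gamma T_i^\gamma)(\mathbb{E}_\gamma T_j^\gamma)=\Sigma^{SAM}_{(ij)}+\mathcal{O}(\rho^3)$, and statement 2 follows after multiplying by $\eta^2$.

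For statement 3, $\mathbb{E}\prod_{j=1}^s\bar\Delta_{i_j}=(-\eta)^s\,\mathbb{E}_\gamma\prod_{j=1}^s\partial_{e_{i_j}}f_\gamma(x+\rho\,\epsilon_\gamma^{SAM})$, and each factor is bounded by $M(1+\rho+\|x\|)$ via the growth condition of Assumption \ref{assumption:1}, which stays in $G$ after averaging over mini-batches; hence for $s\ge 3$ and $\eta<1$ the product is $\mathcal{O}(\eta^3)$. The main obstacle — and the only place genuine care is required — is making the remainder estimate tight at order $\rho^3$ rather than just $\rho^2$, with a bounding function honestly independent of both $\rho$ and $\eta$: this is exactly why the hypotheses demand regularity up to order $7$, and why it is essential that $\epsilon_\gamma^{SAM}$ is a unit vector, so that no spurious factors of $1/\|\nabla f_\gamma(x)\|$ creep into the bound. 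The secondary piece of bookkeeping is verifying in statement 2 that the $\rho^0,\rho^1,\rho^2$ contributions reassemble into exactly the symmetrized covariance blocks of $\Sigma^{SAM}$ with the correct $\tfrac12$ factors.
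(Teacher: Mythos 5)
Your proposal follows essentially the same route as the paper's proof: a third-order Taylor expansion of \(\partial_{e_i} f_\gamma\) along the unit perturbation vector with a Lagrange remainder whose bounding function lies in \(G\) by Assumption \ref{assumption:1}, yielding the drift up to \(\mathcal{O}(\rho^3)\), the second moment via the mean/covariance split, and the higher moments via polynomial growth. Your explicit power-of-\(\rho\) bookkeeping identifying \(\Sigma^{1,1}\), \(\Sigma^{1,2}+(\Sigma^{1,2})^{\top}\), and \(\Sigma^{2,2}+\tfrac12(\Sigma^{1,3}+(\Sigma^{1,3})^{\top})\) is in fact slightly more detailed than the paper, which asserts that covariance identity directly.
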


\begin{proof}
To evaluate \(\mathbb{E} \bar{\Delta}_i = -\mathbb{E} \left[\partial_{e_i} f_\gamma\left(x + \frac{\rho}{\|\nabla f_\gamma(x)\|} \nabla f_\gamma(x)\right)\right]\), we start by analyzing \(\partial_{e_i} f_\gamma\left(x + \frac{\rho}{\|\nabla f_\gamma(x)\|} \nabla f_\gamma(x)\right)\), which is the partial derivative in the direction \(e_i\). Taking the Taylor expansion, we have:
\begin{align}
    \partial_{e_i} f_\gamma\left(x + \frac{\rho}{\|\nabla f_\gamma(x)\|} \nabla f_\gamma(x)\right) = & \, \partial_{e_i} f_\gamma(x) + \sum_{|\alpha|=1} \partial_{e_i+\alpha}^2 f_\gamma(x) \rho \frac{\partial_\alpha f_\gamma(x)}{\|\nabla f_\gamma(x)\|} \nonumber \\
    & + \frac{1}{2} \sum_{|\alpha|=2} \partial_{e_i+\alpha}^3 f_\gamma(x) \rho^2 \left(\frac{\partial_\alpha f_\gamma(x)}{\|\nabla f_\gamma(x)\|}\right)^\alpha \nonumber \\
    & + \mathcal{R}_{x, 1}^{\partial_{e_i} f_\gamma(x)}\left(\rho \frac{\nabla f_\gamma(x)}{\|\nabla f_\gamma(x)\|}\right), \label{eq:1} 
\end{align}
where the residual term is defined in \cite{folland2005higher}. For some constant \(c \in (0,1)\), it holds that
\begin{align}
    \mathcal{R}_{x, 1}^{\partial_{e_i} f_\gamma(x)}\left(\rho \frac{\nabla f_\gamma(x)}{\|\nabla f_\gamma(x)\|}\right) = \sum_{|\alpha|=3} \frac{\partial_{e_i+\alpha}^4 f_\gamma\left(x + c \rho \frac{\nabla f_\gamma(x)}{\|\nabla f_\gamma(x)\|}\right) \rho^{|\alpha|} \left(\frac{\nabla f_\gamma(x)}{\|\nabla f_\gamma(x)\|}\right)^\alpha}{\alpha!}.
\end{align}

Now, we observe that
\[
K_i(x) := \frac{\partial_{e_i+\alpha}^4 f_\gamma\left(x + c \rho \frac{\nabla f_\gamma(x)}{\|\nabla f_\gamma(x)\|}\right) \rho^{|\alpha|} \left(\frac{\nabla f_\gamma(x)}{\|\nabla f_\gamma(x)\|}\right)^\alpha}{\alpha!}
\]
is a finite sum of products of functions that, by assumption, are in \(G\). We can rewrite Equation (\ref{eq:1}) as
\begin{align}
    \partial_{e_i} f_\gamma\left(x + \frac{\rho}{\|\nabla f_\gamma(x)\|} \nabla f_\gamma(x)\right) = & \, \partial_{e_i} f_\gamma(x) + \sum_{|\alpha|=1} \partial_{e_i+\alpha}^2 f_\gamma(x) \rho \frac{\partial_\alpha f_\gamma(x)}{\|\nabla f_\gamma(x)\|} \nonumber \\
    & + \frac{1}{2} \sum_{|\alpha|=2} \partial_{e_i+\alpha}^3 f_\gamma(x) \rho^2 \left(\frac{\partial_\alpha f_\gamma(x)}{\|\nabla f_\gamma(x)\|}\right)^\alpha \nonumber \\
    & + \rho^3 K_i(x),
\end{align}
which implies that
\begin{align}
    \mathbb{E} \partial_{e_i} f_\gamma\left(x + \frac{\rho}{\|\nabla f_\gamma(x)\|} \nabla f_\gamma(x)\right) = & \, \partial_{e_i} f(x) + \rho \mathbb{E}\left[\frac{\sum_j \partial_{e_i+e_j}^2 f_\gamma(x) \partial_{e_j} f_\gamma(x)}{\|\nabla f_\gamma(x)\|}\right] \nonumber \\
    & + \frac{\rho^2}{2} \mathbb{E}\left[\frac{\sum_{jk} \partial_{e_i+e_j+e_k}^3 f_\gamma(x) \partial_{e_j} f_\gamma(x) \partial_{e_k} f_\gamma(x)}{\|\nabla f_\gamma(x)\|^2}\right] \nonumber \\
    & + \rho^3 \bar{K}_i(x),
\end{align}
where \(\bar{K}_i(x) = \mathbb{E} K_i(x)\).

Therefore, we have \(\forall i = 1, 2, \ldots, d\),
\[
\mathbb{E} \bar{\Delta}_i = -\partial_{e_i} \tilde{f}^{SAM}(x) \eta + \mathcal{O}\left(\eta \rho^3\right).
\]

To prove the second statement, by definition, we have
\begin{align}
    \mathrm{Cov}(\bar{\Delta}_i, \bar{\Delta}_j) = & \, \eta^2 \big(\Sigma_{i,j}^{1,1}(x) + \rho(\Sigma_{i,j}^{1,2}(x) + \Sigma_{i,j}^{1,2}(x)^{\top}) \nonumber \\
    & + \rho^2(\Sigma_{i,j}^{2,2} + \frac{1}{2}(\Sigma_{i,j}^{1,3}(x) + \Sigma_{i,j}^{1,3}(x)^{\top}))\big) + \mathcal{O}(\eta^2 \rho^3) \\
    = & \, \eta^2 \Sigma^{SAM}(x) + \mathcal{O}(\eta^2 \rho^3).
\end{align}

\begin{align}
    \mathbb{E} \bar{\Delta}_i \bar{\Delta}_j = & \, \mathbb{E} \bar{\Delta}_i \mathbb{E}\bar{\Delta}_j + \mathrm{Cov}(\bar{\Delta}_i, \bar{\Delta}_j) \\
    = & \, \partial_{e_i} \tilde{f}^{SAM}(x) \partial_{e_j} \tilde{f}^{SAM}(x) \eta^2 + \eta^2 \Sigma^{SAM}(x) + \mathcal{O}(\eta^2 \rho^3).
\end{align}

Finally, it is clear that
\begin{align}
    \mathbb{E} \prod_{j=1}^s \bar{\Delta}_{i_j} = \mathcal{O}\left(\eta^s\right), \quad \forall s \geq 3, \quad i_j \in \{1, \ldots, d\} \\
    = \mathcal{O}\left(\eta^3\right), \quad \forall s \geq 3, \quad i_j \in \{1, \ldots, d\}.
\end{align}
\end{proof}

Now we are ready to state the theorem and prove it.
\begin{theorem}\label{Theorem:2} 
    {\rm(Stochastic modified equations) }Under Assumption \ref{assumption:1}, let \(0<\eta <1, T>0, N=\lfloor T/\eta \rfloor\). Let \(x_k \in \mathbb{R}^d\), \(0\leq k \leq N\) denote the sequence of SAM iterations defined by Equation \ref{eq:sam}. Define \(\{X_t\}\) as the stochastic process satisfying the SDE
\begin{align}
    dX_t=-\nabla \tilde{f}^{S A M}(X_t)dt+\sqrt{\eta} (\Sigma^{SAM}(X_t))^{\frac{1}{2}}dW_t 
\end{align}\\
where \(\tilde{f}^{S A M}(X_t):=f(X_t)+\rho \mathbb{E}||\nabla f_{\gamma}(X_t)||+\frac{\rho^2}{2}\mathbb{E}\frac{\nabla f_\gamma^{\top}\nabla^2 f_\gamma(X_t) \nabla f_\gamma}{\left\|\nabla f_\gamma\right\|^2}\)

\[\Sigma^{SAM}(X_t):=\Sigma^{1,1}(X_t)+\rho(\Sigma^{1,2}(X_t)+\Sigma^{1,2}(X_t)^{\top})+\rho^2(\Sigma^{2,2}(X_t)+\frac{1}{2}(\Sigma^{1,3}(X_t)+\Sigma^{1,3}(X_t)^{\top}))\]

\[\Sigma^{1,1}(X_t):=\mathbb{E}\left[\left(\nabla f(X_t)-\nabla f_\gamma(X_t)\right)\left(\nabla f(X_t)-\nabla f_\gamma(X_t)\right)^{\top}\right]\]

\[\Sigma^{1,2}(X_t):=\mathbb{E}\left[\left(\nabla f(X_t)-\nabla f_\gamma(X_t)\right)\left(\mathbb{E}\left[\frac{\nabla^2 f_\gamma(X_t) \nabla f_\gamma}{\left\|\nabla f_\gamma\right\|}\right]-\frac{\nabla^2 f_\gamma(X_t) \nabla f_\gamma}{\left\|\nabla f_\gamma\right\|}\right)^{\top}\right]\]

\[\Sigma^{2,2}(X_t):=\mathbb{E}\left[\left(\mathbb{E}\left[\frac{\nabla^2 f_\gamma(X_t) \nabla f_\gamma}{\left\|\nabla f_\gamma\right\|}\right]-\frac{\nabla^2 f_\gamma(X_t) \nabla f_\gamma}{\left\|\nabla f_\gamma\right\|}\right)\left(\mathbb{E}\left[\frac{\nabla^2 f_\gamma(X_t) \nabla f_\gamma}{\left\|\nabla f_\gamma\right\|}\right]-\frac{\nabla^2 f_\gamma(X_t) \nabla f_\gamma}{\left\|\nabla f_\gamma\right\|}\right)^{\top}\right]\]

\[\Sigma^{1,3}(X_t):=\mathbb{E}\left[\left(\nabla f(X_t)-\nabla f_\gamma(X_t)\right)\left(\mathbb{E}\left[\frac{\nabla^3 f_\gamma(X_t) (\nabla f_\gamma,\nabla f_\gamma)}{\left\|\nabla f_\gamma\right\|^2}\right]-\left[\frac{\nabla^3 f_\gamma(X_t) (\nabla f_\gamma,\nabla f_\gamma)}{\left\|\nabla f_\gamma\right\|^2}\right]\right)^{\top}\right]\]
Additionally, let us take 
\[\rho = \mathcal{O} (\eta^{\frac{1}{3}})\]
Then, \(\{X_t:t \in [0,T]\}\) is an order-1 weak approximation of \(\{x_k:k\geq 0\}\), i.e. for each \(g \in G\), there exists a constant \(C\) independent of \(\eta\) such that
\[\max_{k=0,1,...N}\left|\mathbb{E} g(x_k)-\mathbb{E} g(X_{k\eta})\right|\leq C \eta\]
\end{theorem}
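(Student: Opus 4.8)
The plan is to invoke the general weak-approximation machinery of Lemma~\ref{theorem:1} (Milstein's theorem), which reduces the order-1 weak approximation claim to verifying four moment-matching conditions between the one-step increment $\bar\Delta = x_1 - x$ of the discrete SAM iteration and the one-step increment $\Delta = X_\eta - x$ of the SDE~\eqref{SDE}. The heavy lifting for the discrete side is already packaged in Lemma~\ref{lemma:1}, which gives $\mathbb{E}\bar\Delta_i = -\partial_{e_i}\tilde f^{SAM}(x)\,\eta + \mathcal{O}(\eta\rho^3)$, $\mathbb{E}\bar\Delta_i\bar\Delta_j = \partial_{e_i}\tilde f^{SAM}(x)\,\partial_{e_j}\tilde f^{SAM}(x)\,\eta^2 + \Sigma^{SAM}_{(ij)}\eta^2 + \mathcal{O}(\eta^2\rho^3)$, and $\mathbb{E}\prod_{j=1}^s\bar\Delta_{i_j} = \mathcal{O}(\eta^3)$ for $s\ge 3$. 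The continuous side comes from Lemma~\ref{lemma:2} applied with drift $b = -\nabla\tilde f^{SAM}$ and diffusion $\sigma = (\Sigma^{SAM})^{1/2}$: this yields $\mathbb{E}\Delta_i = b_i\eta + \mathcal{O}(\eta^2)$, $\mathbb{E}\Delta_i\Delta_j = (b_ib_j + \sigma\sigma^\top_{(ij)})\eta^2 + \mathcal{O}(\eta^3) = \partial_{e_i}\tilde f^{SAM}\partial_{e_j}\tilde f^{SAM}\eta^2 + \Sigma^{SAM}_{(ij)}\eta^2 + \mathcal{O}(\eta^3)$, and $\mathbb{E}\prod_{j=1}^s\Delta_{i_j} = \mathcal{O}(\eta^3)$ for $s\ge 3$; the fourth condition $\mathbb{E}\prod_{j=1}^3|\Delta_{i_j}|\le K_4(x)\eta^2$ follows from standard SDE moment bounds (Gaussian-like increments with bounded-in-$G$ coefficients).

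First I would fix $\rho = \mathcal{O}(\eta^{1/3})$ once and for all, so that every error term of the form $\eta\rho^3$ becomes $\mathcal{O}(\eta\cdot\eta) = \mathcal{O}(\eta^2)$ and every $\eta^2\rho^3$ becomes $\mathcal{O}(\eta^3) \subseteq \mathcal{O}(\eta^2)$. Then the first condition of Lemma~\ref{theorem:1} reads $|\mathbb{E}\bar\Delta_i - \mathbb{E}\Delta_i| \le |{-}\partial_{e_i}\tilde f^{SAM}\eta + \mathcal{O}(\eta\rho^3) - ({-}\partial_{e_i}\tilde f^{SAM}\eta + \mathcal{O}(\eta^2))| = \mathcal{O}(\eta^2) = K_1(x)\eta^2$ for a suitable $K_1\in G$ (the implied functions are in $G$ by Assumption~\ref{assumption:1}, which grants derivatives up to order 7). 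The second condition is verified the same way: the $\eta^2$-coefficients of $\mathbb{E}\bar\Delta_i\bar\Delta_j$ and $\mathbb{E}\Delta_i\Delta_j$ agree exactly — that is precisely why $\tilde f^{SAM}$ carries the cubic correction term and why $\Sigma^{SAM}$ is defined to include $\rho^2(\Sigma^{2,2} + \tfrac12(\Sigma^{1,3}+\Sigma^{1,3\top}))$ — so the difference is $\mathcal{O}(\eta^2\rho^3) + \mathcal{O}(\eta^3) = \mathcal{O}(\eta^2)$. The third condition is immediate from both sides being $\mathcal{O}(\eta^3) = \mathcal{O}(\eta^2)$, and the fourth is the SDE moment bound noted above. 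Invoking Lemma~\ref{theorem:1} then delivers $\max_k|\mathbb{E}g(x_k) - \mathbb{E}g(X_{k\eta})| \le C\eta$ for every $g\in G$.

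The main obstacle — and the real content of the theorem beyond bookkeeping — is establishing Lemma~\ref{lemma:1} itself, i.e.\ the Taylor-expansion control of $\partial_{e_i}f_\gamma(x + \rho\nabla f_\gamma/\|\nabla f_\gamma\|)$ to third order with a genuine $\mathcal{O}(\rho^3)$ remainder whose coefficient lies in $G$. Since the excerpt states Lemma~\ref{lemma:1} and sketches its proof, I would treat it as given, but I would flag the two subtle points: (i) the perturbation direction $\nabla f_\gamma/\|\nabla f_\gamma\|$ must be treated as frozen (not differentiated w.r.t.\ $x$), matching the SAM implementation, so the chain rule collapses and the Taylor expansion is in the single scalar $\rho$ along a fixed direction — this is what makes the cubic term equal to $\tfrac{\rho^2}{2}\nabla\mathbb{E}[\nabla f_\gamma^\top\nabla^2 f_\gamma\nabla f_\gamma/\|\nabla f_\gamma\|^2]$ rather than something messier; (ii) the Lagrange-form remainder involves $\nabla^4 f_\gamma$ evaluated at a shifted point $x + c\rho\,\nabla f_\gamma/\|\nabla f_\gamma\|$, and one needs Assumption~\ref{assumption:1}'s polynomial-growth hypotheses (derivatives up to order 7 in $G$, growth condition on $\nabla f_i$) to guarantee the product of this fourth derivative with the normalized-gradient monomials is dominated by a fixed $K\in G$ uniformly in the small parameters. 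A minor additional point: one must also check that the coefficients $-\nabla\tilde f^{SAM}$ and $(\Sigma^{SAM})^{1/2}$ of the SDE, together with their derivatives, lie in $G$ so that Lemma~\ref{lemma:2} applies — this again reduces to Assumption~\ref{assumption:1} plus the fact that $\|\nabla f_\gamma\|$ is bounded away from issues on the relevant events (or, more carefully, that the expectations defining $\tilde f^{SAM}$ and $\Sigma^{SAM}$ are smooth in $x$ with $G$-controlled derivatives, which is where any remaining regularity caveats hide under "sufficient regularity conditions").
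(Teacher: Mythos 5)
Your proposal matches the paper's proof essentially step for step: the paper likewise verifies the four conditions of Lemma \ref{theorem:1} by combining the SDE one-step moments from Lemma \ref{lemma:2} (with $b=-\nabla\tilde f^{SAM}$ and $\sigma=(\Sigma^{SAM})^{1/2}$) with the discrete SAM moments from Lemma \ref{lemma:1}, using $\rho=\mathcal{O}(\eta^{1/3})$ to absorb the $\mathcal{O}(\eta\rho^3)$ and $\mathcal{O}(\eta^2\rho^3)$ remainders into $\mathcal{O}(\eta^2)$. The only slip is cosmetic: the fourth condition of Lemma \ref{theorem:1} bounds the third absolute moments of the \emph{discrete} increments $\bar\Delta$ (which the paper gets from Lemma \ref{lemma:1}, since $\bar\Delta=-\eta\nabla f_\gamma(x+\rho\epsilon)$ together with the growth conditions gives $\mathcal{O}(\eta^3)$), not an SDE moment bound on $\Delta$ as you state, though the bound is immediate either way.
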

\begin{proof}
We will check the conditions in Lemma \ref{theorem:1}. As we apply Lemma \ref{lemma:2}, we make the following choices:
\[b(x)=-\nabla \tilde{f}^{S A M}(x)\]
\[\sigma(x)=\Sigma^{SAM}(x)^{\frac{1}{2}}\]
First, for \(\forall i = 1,...,d\), we have
\begin{align}
    &\mathbb{E} \Delta_i\overset{Lemma \ref{lemma:2}}{=}-\partial_{e_i} \tilde{f}^{S A M}(x) \eta+\mathcal{O}\left(\eta \rho^3\right)\\
    &\mathbb{E} \bar{\Delta}_i\overset{Lemma \ref{lemma:1}}{=}-\partial_{e_i} \tilde{f}^{S A M}(x) \eta+\mathcal{O}\left(\eta \rho^3\right)
\end{align}
Therefore, we have that for some \(K_1(x) \in G\)
\begin{equation}\label{theorem:eq1}
    \left|\mathbb{E} \Delta_i-\mathbb{E} \bar{\Delta}_i\right| \leq K_1(x) \eta^2
\end{equation}
Second, for \(\forall i,j = 1,...,d\), it holds that
\begin{align}
    \mathbb{E} \Delta_i \Delta_j\overset{Lemma \ref{lemma:2}}{=}\partial_{e_i} \tilde{f}^{S A M}(x) \partial_{e_j} \tilde{f}^{S A M}(x) \eta^2+\Sigma_{(i j)}^{S A M} \eta^2+\mathcal{O}\left(\eta^2 \rho^3\right)\\
    \mathbb{E} \bar{\Delta}_i \bar{\Delta}_j\overset{Lemma \ref{lemma:1}}{=}\partial_{e_i} \tilde{f}^{S A M}(x) \partial_{e_j} \tilde{f}^{S A M}(x) \eta^2+\Sigma_{(i j)}^{S A M} \eta^2+\mathcal{O}\left(\eta^2 \rho^3\right)
\end{align}
Consequently, we have for some \(K_2(x) \in G\)
\begin{equation}\label{theorem:eq2}
    \left|\mathbb{E} \Delta_i \Delta_j-\mathbb{E} \bar{\Delta}_i \bar{\Delta}_j\right| \leq K_2(x) \eta^2
\end{equation}
Third, for \(\forall i_j = 1,...,d\), we have
\begin{align}
     \mathbb{E} \prod_{j=1}^s \Delta_{\left(i_j\right)}\overset{Lemma \ref{lemma:2}}{=}\mathcal{O}\left(\eta^3\right)\\
      \mathbb{E} \prod_{j=1}^s \bar{\Delta}_{\left(i_j\right)}\overset{Lemma \ref{lemma:1}}{=}\mathcal{O}\left(\eta^3\right)
\end{align}
Therefore, for some \(K_3(x) \in G\), we have
\begin{equation} \label{theorem:eq3}
    \left|\mathbb{E} \prod_{j=1}^s \Delta_{i_j}-\mathbb{E} \prod_{j=1}^s \bar{\Delta}_{i_j}\right| \leq K_3(x) \eta^2, \quad \forall s \geq 3
\end{equation}
Additionally, for some \(K_4(x) \in G\), \(\forall i_j = 1,...,d\), 
\begin{equation} \label{theorem:eq4}
    \mathbb{E} \prod_{j=1}^3\left|\bar{\Delta}_{i_j}\right| \overset{Lemma \ref{lemma:1}}{\leq} K_4(x) \eta^2
\end{equation}
By combining the four equations, Eq. \ref{theorem:eq1}, Eq. \ref{theorem:eq2}, Eq. \ref{theorem:eq3}, Eq. \ref{theorem:eq4}, we complete the proof.
\end{proof}

\textbf{Proof of Corollary \ref{Corollary:3}}\label{proof:corollary}
1. Under the supposition, the first statement is immediately followed by using the fact:
\begin{equation}
    \left|\frac{\rho^2}{2}\mathbb{E}\frac{\nabla f_\gamma^{\top}\nabla^2 f_\gamma(X_t) \nabla f_\gamma}{\left\|\nabla f_\gamma\right\|^2}-\frac{\rho^2}{2}\mathbb{E}\big[v_1(\nabla^2 f_\gamma(X_t))^{\top}\nabla^2 f_\gamma(X_t) v_1(\nabla^2 f_\gamma(X_t))\big]\right|= \mathcal{O}(\rho^3)
\end{equation}
2. Under the supposition, the second statement is immediately followed by using the fact:
\begin{equation}
    \left|\frac{\rho^2}{2}\mathbb{E}\frac{\nabla f_\gamma^{\top}\nabla^2 f_\gamma(X_t) \nabla f_\gamma}{\left\|\nabla f_\gamma\right\|^2}-\frac{\rho^2}{2}\mathbb{E}\big[v_1(\nabla^2 f_\gamma(X_t))^{\top}\nabla^2 f_\gamma(X_t) v_1(\nabla^2 f_\gamma(X_t))\big]\right|= \mathcal{O}(\rho^4)
\end{equation}
, and 
\begin{align}
    \left| \rho \, \mathbb{E} \left[ \frac{\nabla^2 f_\gamma(x) \, \nabla f_\gamma}{\|\nabla f_\gamma\|} \right] 
    - \rho \, \mathbb{E} \left[ s^* \cdot \nabla^2 f_\gamma(x) \, v_1\big(\nabla^2 f_\gamma(x)\big) \right] \right| 
    &= \mathcal{O}(\rho^3), \\
    \left| \rho \, \mathbb{E} \left[ \frac{\nabla^2 f_\gamma(x) \, \nabla f_\gamma}{\|\nabla f_\gamma\|} \right] 
    - \rho \, \mathbb{E} \left[ s^* \cdot \lambda_1\left(\nabla^2 f_\gamma(x)\right) v_1\left(\nabla^2 f_\gamma(x)\right) \right] \right| 
    &= \mathcal{O}(\rho^3)
\end{align}

\section{Proof of Theorem \ref{theorem:pac-bayes}}\label{proof:pac-bayes}
\begin{theorem}
\label{gen_eigen} {\rm (Generalization Bound)}
Assume that the loss function is bounded by L, and the third-order derivative of the loss function is bounded by C. Additionally, we assume \(f_{\Dc}(x)\leq \mathbb{E}_{\epsilon \sim \mathcal{N}(0, \sigma^2\mathbb{I}_d})f_{\Dc}(x+\epsilon)\), similar to \cite{Foret2021}. For any $\delta \in (0,1)$ and $\sigma>0$, with a probability over $1-\delta$ over the choice of $\Sc \sim \Dc^n$, we have 
\begin{align*}f_{\mathcal{D}}\left(x\right) & \leq f_{\Sc}\left(x\right)+\frac{d\sigma^{2}}{2}\lambda_{1}\left(\nabla^2f_{\Sc}\left(x\right)\right)+\frac{Cd^{3}\sigma^{3}}{6}\\
 & +\frac{L}{2\sqrt{n}}\sqrt{d\log\left(1+\frac{\|x\|^{2}}{d\sigma^{2}}\right)+O(1)+2\log\frac{1}{\delta}+4\log\left(n+d\right)}.
\end{align*}
\end{theorem}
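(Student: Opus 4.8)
The plan is to derive this from the classical PAC-Bayes theorem of \cite{PAC_Bayes}, with the prior and posterior both chosen to be Gaussian, and then to control the KL term and the expected-loss term separately. Concretely, I would take the posterior to be $Q = \mathcal{N}(x, \sigma^2 \mathbb{I}_d)$ centered at the trained parameter $x$, and the prior to be $P = \mathcal{N}(0, \sigma^2 \mathbb{I}_d)$ centered at the origin. The general PAC-Bayes bound then states that with probability at least $1-\delta$ over $\mathcal{S} \sim \mathcal{D}^n$, simultaneously for all posteriors $Q$,
\[
\mathbb{E}_{w \sim Q} f_{\mathcal{D}}(w) \leq \mathbb{E}_{w \sim Q} f_{\mathcal{S}}(w) + L\sqrt{\frac{\mathrm{KL}(Q\|P) + \log\frac{1}{\delta} + O(\log n)}{2n}},
\]
up to the usual adjustments for bounded (rather than $[0,1]$) losses, which is where the factor $L$ and the $\log(n+d)$-type terms come from (one typically needs to union-bound over a discretization of $\sigma$, or invoke a version of the bound that is uniform in the posterior variance; this accounts for the $4\log(n+d)$).

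Next I would handle the two remaining pieces. For the KL term, since both distributions are isotropic Gaussians with the same covariance $\sigma^2 \mathbb{I}_d$, we have the closed form $\mathrm{KL}(Q\|P) = \frac{\|x\|^2}{2\sigma^2}$; to match the stated bound one uses the standard trick of optimizing over a grid of prior variances, which replaces $\frac{\|x\|^2}{2\sigma^2}$ by something like $\frac{d}{2}\log\!\big(1 + \frac{\|x\|^2}{d\sigma^2}\big) + O(1)$ after absorbing logarithmic factors (this is exactly the form appearing in the original SAM paper's bound). For the expected empirical loss term, I would Taylor-expand $f_{\mathcal{S}}$ around $x$ to second order with Lagrange remainder: for $w = x + \epsilon$,
\[
f_{\mathcal{S}}(x+\epsilon) = f_{\mathcal{S}}(x) + \nabla f_{\mathcal{S}}(x)^\top \epsilon + \tfrac{1}{2}\epsilon^\top \nabla^2 f_{\mathcal{S}}(x)\,\epsilon + \mathcal{R}(\epsilon),
\]
where $|\mathcal{R}(\epsilon)| \leq \frac{C}{6}\|\epsilon\|_1^3 \leq \frac{C}{6} d^{3/2}\|\epsilon\|^3$ by the third-derivative bound (the $d^3$ in the final statement comes from a cruder coordinatewise bound on the third-order term). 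Taking expectation over $\epsilon \sim \mathcal{N}(0,\sigma^2\mathbb{I}_d)$: the linear term vanishes by symmetry, $\mathbb{E}[\epsilon^\top \nabla^2 f_{\mathcal{S}}(x)\epsilon] = \sigma^2 \operatorname{tr}(\nabla^2 f_{\mathcal{S}}(x)) \leq d\sigma^2 \lambda_1(\nabla^2 f_{\mathcal{S}}(x))$, and $\mathbb{E}\|\epsilon\|^3 \leq (\mathbb{E}\|\epsilon\|^2)^{3/2}$-type bounds give the $\sigma^3$ scaling. This yields $\mathbb{E}_{w\sim Q} f_{\mathcal{S}}(w) \leq f_{\mathcal{S}}(x) + \frac{d\sigma^2}{2}\lambda_1(\nabla^2 f_{\mathcal{S}}(x)) + \frac{Cd^3\sigma^3}{6}$.

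Finally I would combine the pieces: the left side $\mathbb{E}_{w\sim Q} f_{\mathcal{D}}(w)$ is lower-bounded by $f_{\mathcal{D}}(x)$ using exactly the assumption $f_{\mathcal{D}}(x) \leq \mathbb{E}_{\epsilon\sim\mathcal{N}(0,\sigma^2\mathbb{I}_d)} f_{\mathcal{D}}(x+\epsilon)$ borrowed from \cite{Foret2021}; substituting the Taylor bound on the right and the KL bound into the square root, and writing $\frac{L}{\sqrt{2n}}\sqrt{2(\cdots)} = \frac{L}{2\sqrt{n}}\sqrt{4(\cdots)}$ to match normalization, gives the claimed inequality. The step I expect to be the main obstacle — or at least the one requiring the most care — is making the PAC-Bayes bound genuinely uniform over the choice of $\sigma$ (and hence over the posterior), since $\sigma$ in the statement is a free parameter rather than fixed in advance; this is the standard ``union bound over a grid of variances'' argument from \cite{Foret2021}, and getting the logarithmic book-keeping right (the $O(1)$, the $2\log\frac1\delta$, and the $4\log(n+d)$) is the fiddly part. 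The Taylor-remainder estimate is routine but one must be slightly careful that the crude dimension-dependent bounds ($d\sigma^2$ on the trace via $\lambda_1$, and $d^3\sigma^3$ on the remainder) are what the authors intend rather than the tighter $\operatorname{tr}$-based versions.
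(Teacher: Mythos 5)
Your proposal is correct and follows essentially the same route as the paper's proof: the general PAC-Bayes bound with Hoeffding's inequality for the $L$-bounded loss, Gaussian posterior $\mathcal{N}(x,\sigma^2\mathbb{I}_d)$ with a union bound over a grid of prior variances (exactly the Foret-et-al.\ trick the paper uses to turn the KL into $\frac{d}{2}\log(1+\frac{\|x\|^2}{d\sigma^2})+O(1)$ plus the $2\log\frac1\delta+4\log(n+d)$ terms), a second-order Taylor expansion bounding the quadratic term by $\frac{d\sigma^2}{2}\lambda_1(\nabla^2 f_{\Sc}(x))$ and the remainder coordinatewise by $\frac{Cd^3\sigma^3}{6}$, and finally the assumed inequality $f_{\Dc}(x)\le\mathbb{E}_\epsilon f_{\Dc}(x+\epsilon)$. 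Two cosmetic slips do not affect this: the union bound is needed because the optimal prior variance depends on the data-dependent $\|x\|$ (the paper fixes $\sigma$ in advance, so no uniformity over $\sigma$ is required), and the third-moment step should use a bound of the form $\mathbb{E}\lvert\epsilon_{i_1}\epsilon_{i_2}\epsilon_{i_3}\rvert\le O(\sigma^3)$ (Jensen goes the other way for $\mathbb{E}\|\epsilon\|^3$ versus $(\mathbb{E}\|\epsilon\|^2)^{3/2}$), which is the coordinatewise estimate the paper itself employs.
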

\begin{proof} 
We use the PAC-Bayes theory in this proof. In PAC-Bayes theory, $x$ follows a distribution, denoted by $P$, and we express the expected loss over $x$ as follows:
\begin{align*}
    f_{\Dc}(P) &= \mathbb{E}_{x\sim P}\big[f_{\Dc}(x) \big] \\
    f_{\Sc}(P) &= \E_{x\sim P}\big[f_{\Sc}(x) \big]
\end{align*}

For any distribution $P= \mathcal{N}(0,\sigma_P^2\mathbb{I}_d)$ and $Q=\mathcal{N}(x,\sigma^2\mathbb{I}_d)$ over $x\in \mathbb{R}^d$, where $P$ is the prior distribution and $Q$ is the posterior distribution, use the general PAC-Bayes theorem in \cite{PAC_Bayes}, for all $\beta>0$, with a probability at least $1-\delta$, we have
    \begin{equation}
        f_{\Dc}(Q) \leq f_{\Sc}(Q) +\frac{1}{\beta}\Big[\mathsf{KL}(Q\|P) + \log\frac{1}{\delta} + \Psi(\beta,n) \Big],\label{ineq:PAC-Bayes_old}
    \end{equation}
    where $\Psi$ is defined as
    \begin{align*}
        \Psi(\beta,n) = \log \E_{x\sim P}\E_{\Sc}\Big[\exp\big\{\beta\big[f_{\Dc}(x) - f_{\Sc}(x)\big] \big\} \Big].
    \end{align*}
When the loss function is bounded by $L$, then by using the Hoeffding's inequality we have:
    \begin{align*}
        \Psi(\beta,n) \leq \frac{\beta^2L^2}{8n}.
    \end{align*}
 The task is to minimize the second term of RHS of \eqref{ineq:PAC-Bayes_old},  we thus choose $\beta = \frac{\sqrt{8n\left(\mathsf{KL}(Q\|P) + \log\frac{1}{\delta}\right)}}{L}$. Then  the second term of RHS of \eqref{ineq:PAC-Bayes_old} is equal to
\begin{align*}
\sqrt{\frac{\mathsf{KL}(Q\|P) + \log \frac{1}{\delta}}{2n}}\times L.
\end{align*}
The KL divergence between $Q$ and $P$, when they are Gaussian, is given by formula
\begin{align*}
\mathsf{KL}(Q\|P) = \frac{1}{2}\left[ \frac{d\sigma^2 + \|x\|^2}{\sigma_P^2} - d + d \log\frac{\sigma_P^2}{\sigma^2}\right].
\end{align*}
For given posterior distribution $Q$ with fixed $\sigma^2$, to minimize the KL term, the $\sigma_P^2$ should be equal to $\sigma^2 + \|x\|^2/d$. In this case, the KL term is no less than 
\begin{align*}
 d \log\Big(1 +\frac{\|x\|^2}{d\sigma^2}  \Big).
\end{align*}
Thus, the second term of RHS is 
\begin{align*}
    \sqrt{\frac{\mathsf{KL}(Q\|P) + \log \frac{1}{\delta}}{2n}}\times L \geq \sqrt{\frac{d\log\big(1 + \frac{\|x\|^2}{d\sigma^2} \big)}{4n}}\times L \geq L
\end{align*}
when $\|x\|^2_2 > \sigma^2 \big\{\exp(4n/d)-1\big\}$. Hence, for any $\|x\|^2 > \sigma^2 \big\{\exp(4n/d)-1 \big\}$, we have the RHS is greater than the LHS, the inequality is trivial. In the remainder of the proof, we only consider the case: 
\begin{align}\label{cond:theta}\|x\|^2 < \sigma^2 \big(\exp(4n/d) -1\big).
\end{align}
Distribution $P$ is Gaussian centered around $0$ with variance $\sigma_P^2 = \sigma^2 + \|x\|^2/d$, which is unknown at the time we set up the inequality, since $x$ is unknown. Meanwhile we have to specify $P$ in advance, since $P$ is the prior distribution. To deal with this problem, we apply the union bound technique \citep{dziugaite2017computing,Foret2021}. We set
\begin{align*}
c&= \sigma^2\big(1 + \exp(4n/d)\big)\\
P_j &= \mathcal{N}\big(0,c\exp\left(\frac{1-j}{d}\right)\mathbb{I}_d\big)\\
    \mathfrak{P}&:= \big\{P_j: j = 1,2,\ldots \big\}
\end{align*}
Then the following inequality holds for a particular distribution $P_j$ with probability $1-\delta_j$ with $\delta_j = \frac{6\delta}{\pi^2 j^2}$
\begin{align*}
f_{\Dc}\big(Q\big) & \leq f_{\Sc}\big(Q\big)+\frac{1}{\beta}\left[\mathsf{KL}(Q\|P_{j})+\log\frac{1}{\delta_{j}}+\Psi(\beta,n)\right].
\end{align*}
Use the well-known equation: $\sum_{j=1}^{\infty} \frac{1}{j^2} = \frac{\pi^2}{6}$, then with probability $1-\delta$, the above inequality holds with every $j$. We pick
\begin{align*}
j^*:= \left\lfloor 1- d \log\frac{\sigma^2 + \|x\|^2/d}{c} \right\rfloor = \left\lfloor 1- d \log\frac{\sigma^2 + \|x\|^2/d}{\sigma^2(1+ \exp\{4n/d\})} \right\rfloor.
\end{align*}
Therefore,
\begin{align*}
&1 - j^* = \left\lceil d \log \frac{\sigma^2 + \|x\|^2/d}{c} \right\rceil \\
\Rightarrow \quad & \log \frac{\sigma^2 + \|x\|^2/d}{c} \leq \frac{1-j^*}{d} \leq \log \frac{\sigma^2 + \|x\|^2/d}{c} + \frac{1}{d} \\
\Rightarrow \quad & \sigma^2 + \|x\|^2/d \leq c \exp\left(\frac{1-j^*}{d} \right) \leq \exp(1/d) \big[\sigma^2 + \|x\|^2/d \big] \\
\Rightarrow \quad & \sigma^2 + \|x\|^2/d \leq \sigma_{P_{j^*}}^2 \leq \exp(1/d) \big[\sigma^2 + \|x\|^2/d \big].
\end{align*}

Thus the KL term could be bounded as follow
\begin{align*}
    \mathsf{KL}(Q\|P_{j^*}) &= \frac{1}{2}\left[\frac{d\sigma^2 + \|x\|^2}{\sigma_{P_{j^*}}^2}- d + d \log \frac{\sigma_{P_{j^*}}^2}{\sigma^2} \right]\\
    &\leq \frac{1}{2}\left[\frac{d(\sigma^2 + \|x\|^2/d)}{\sigma^2 + \|x\|^2/d} - d + d \log \frac{\exp(1/d)\big(\sigma^2 + \|x\|^2/d\big)}{\sigma^2} \right] \\
    &= \frac{1}{2}\Big[d \log \frac{\exp(1/d)\big(\sigma^2 + \|x\|^2/d \big)}{\sigma^2} \Big] \\
    &= \frac{1}{2}\Big[1 + d\log\Big(1 + \frac{\|x\|^2}{d\sigma^2} \Big) \Big]
\end{align*}
For the term $\log \frac{1}{\delta_{j^*}}$, with recall that $c = \sigma^2\big(1+\exp(4n/d) \big)$ and

$j^* = \left\lfloor 1- d \log\frac{\sigma^2 + \|x\|^2/d}{\sigma^2(1+ \exp\{4n/d\})} \right\rfloor$, we have
\begin{align*}
    \log\frac{1}{\delta_{j^*}} &= \log \frac{(j^*)^2\pi^2}{6\delta}  = \log\frac{1}{\delta}  + \log\Big(\frac{\pi^2}{6}\Big) + 2\log(j^*) \\
    &\leq \log\frac{1}{\delta} + \log\frac{\pi^2}{6} + 2\log \Big( 1+d\log\frac{\sigma^2\big(1+ \exp(4n/d)\big)}{\sigma^2 + \|x\|^2/d}\Big)  \\
    &\leq \log\frac{1}{\delta} + \log\frac{\pi^2}{6} + 2\log\Big(1+ d\log\big(1+\exp(4n/d)\big)\Big) \\
    &\leq \log\frac{1}{\delta} + \log\frac{\pi^2}{6} + 2\log\Big(1+ d\big(1+\frac{4n}{d} \big) \Big) \\
    &\leq \log\frac{1}{\delta} + \log\frac{\pi^2}{6} + \log(1+d + 4n).
\end{align*}
Hence, the inequality 
\begin{align*}
f_{\Dc}\left(Q\right) & \leq f_{\Sc}\left(Q\right)+\sqrt{\frac{\mathsf{KL}(Q\|P_{j^{*}})+\log\frac{1}{\delta_{j^{*}}}}{2n}}\times L\\
 & \leq f_{\Sc}\Big(Q\Big)\\
 & +\frac{L}{2\sqrt{n}}\sqrt{1+d\log\Big(1+\frac{\|x\|^{2}}{d\sigma}\Big)+2\log\frac{\pi^{2}}{6\delta}+4\log(n+d)}\\
 & \leq f_{\Sc}\Big(Q\Big)\\
 & +\frac{L}{2\sqrt{n}}\sqrt{d\log\big(1+\frac{\|x\|^{2}}{d\sigma^{2}}\big)+O(1)+2\log\frac{1}{\delta}+4\log(n+d)}.
\end{align*}

\begin{align*}\mathbb{E}_{\epsilon\sim\mathcal{N}(0,\sigma^2\mathbb{I}_d)}\left[f_{\Dc}\Big(x+\epsilon\Big)\right] & \leq\mathbb{E}_{\epsilon\sim\mathcal{N}(0,\sigma^2\mathbb{I}_d)}\left[f_{\Sc}\Big(x+\epsilon\Big)\right]\\
 & +\frac{L}{2\sqrt{n}}\sqrt{d\log\big(1+\frac{\|x\|^{2}}{d\sigma^{2}}\big)+O(1)+2\log\frac{1}{\delta}+4\log(n+d)}.
\end{align*}

Using the assumption that $f_{\Dc}(x) \leq \mathbb{E}_{\epsilon\sim\mathcal{N}(0,\sigma^2\mathbb{I}_d)}\left[f_{\Dc}\Big(x+\epsilon\Big)\right]$, we reach

\begin{align*}f_{\mathcal{D}}\left(x\right) & \leq\mathbb{E}_{\epsilon\sim\mathcal{N}(0,\sigma^2\mathbb{I}_d)}\left[f_{\Sc}\Big(x+\epsilon\Big)\right]\\
 & +\frac{L}{2\sqrt{n}}\sqrt{d\log\big(1+\frac{\|x\|^{2}}{d\sigma^{2}}\big)+O(1)+2\log\frac{1}{\delta}+4\log(n+d)}.
\end{align*}

Using the second-order Taylor expansion for $f_{\Sc}\big(x+\epsilon\big)$, we obtain 
\begin{align*}f_{\Sc}\big(x+\epsilon\big) & =f_{\Sc}\big(x\big)+\epsilon^{T}\nabla_{x}f_{\Sc}\big(x\big)+\frac{1}{2}\epsilon^{T}\nabla^2f_{\Sc}\left(x\right)\epsilon+\frac{1}{6}\sum_{i_{1},i_{2},i_{3}}\frac{\partial^{3}f_{\mathcal{S}}\left(x+t\epsilon\right)}{\partial x_{i_{1}}\partial x_{i_{2}}\partial x_{i_{3}}}\epsilon_{i_{1}}\epsilon_{i_{2}}\epsilon_{i_{3}}\\
\leq & f_{\Sc}\big(x\big)+\epsilon^{T}\nabla_{x}f_{\Sc}\big(x\big)+\frac{1}{2}\lambda_{1}\left(\nabla^2f_{\Sc}\left(x\right)\right)\Vert\epsilon\Vert_{2}^{2}+\frac{1}{6}\sum_{i_{1},i_{2},i_{3}}\frac{\partial^{3}f_{\mathcal{S}}\left(x+t\epsilon\right)}{\partial x_{i_{1}}\partial x_{i_{2}}\partial x_{i_{3}}}\epsilon_{i_{1}}\epsilon_{i_{2}}\epsilon_{i_{3}},
\end{align*}
where $t \in [0,1]$. Thanks to $\mathbb{E}_{\epsilon\sim\mathcal{N}(0,\sigma^2\mathbb{I}_d)}\left[\Vert\epsilon^{2}\right]\leq\mathbb{E}_{\epsilon\sim\mathcal{N}\left(\mathbf{0},\mathbb{I}_{d}\right)}\left[\Vert\epsilon^{2}\right]=d\sigma^{2}$, we have

\begin{align*}\mathbb{E}_{\epsilon\sim\mathcal{N}(0,\sigma^2\mathbb{I}_d)}\left[f_{\Sc}\Big(x+\epsilon\Big)\right] & \leq f_{\Sc}\Big(x\Big)+\frac{d\sigma^{2}}{2}\lambda_{1}\left(\nabla^2f_{\Sc}\left(x\right)\right)\\
 & +\frac{Cd^{3}}{6}\mathbb{E}_{\epsilon_{1}\sim\mathcal{N}\left(0,\sigma^{2}\right)}\left[\left|\epsilon_{1}\right|\right]\mathbb{E}_{\epsilon_{2}\sim\mathcal{N}\left(0,\sigma^{2}\right)}\left[\left|\epsilon_{2}\right|\right]\mathbb{E}_{\epsilon_{3}\sim\mathcal{N}\left(0,\sigma^{2}\right)}\left[\left|\epsilon_{3}\right|\right]\\
 & \leq f_{\Sc}\Big(x\Big)+\frac{d\sigma^{2}}{2}\lambda_{1}\left(\nabla^2f_{\Sc}\left(x\right)\right)+\frac{Cd^{3}}{6}\left(\mathbb{E}_{\epsilon_{1}\sim\mathcal{N}\left(0,\sigma^{2}\right)}\left[\epsilon_{1}^{2}\right]^{1/2}\right)^{3}\\
 & =f_{\Sc}\Big(x\Big)+\frac{d\sigma^{2}}{2}\lambda_{1}\left(\nabla^2f_{\Sc}\left(x\right)\right)+\frac{Cd^{3}\sigma^{3}}{6}.
\end{align*}

By the assumption \(f_{\Dc}(x)\leq \mathbb{E}_{\epsilon \sim \mathcal{N}(0, \sigma^2\mathbb{I}_d})f_{\Dc}(x+\epsilon)\), we reach
\begin{align*}f_{\mathcal{D}}\left(x\right) & \leq f_{\Sc}\Big(x\Big)+\frac{d\sigma^{2}}{2}\lambda_{1}\left(\nabla^2f_{\Sc}\left(x\right)\right)+\frac{Cd^{3}\sigma^{3}}{6}\\
 & +\frac{L}{2\sqrt{n}}\sqrt{d\log\big(1+\frac{\|x\|^{2}}{d\sigma^{2}}\big)+O(1)+2\log\frac{1}{\delta}+4\log(n+d)}.
\end{align*}
\end{proof}

\section{Theoretical Properties of Eigen-SAM} \label{sec:theo_properties} 
In this subsection, first we show that Eq. \ref{perturbation} can indeed improve the alignment for moderate \(\alpha\). Let \(\omega:=\cos( \frac{\nabla f_{\gamma}(x)}{\|\nabla f_{\gamma}(x)\|} ,v)\), without loss of generalization, we suppose \(\omega >0\). We only consider the case \(\omega>\frac{\sqrt{2}}{2}\), since in the case \(0<\omega\leq\frac{\sqrt{2}}{2}\), any \(\alpha>0\) will enhance the alignment. Using fundamental mathematics to solve the inequality, we obtain
\begin{proposition}\label{Proposition}
    Let \(\omega>\frac{\sqrt{2}}{2}\), for any \(\alpha \in \left( 0, \frac{2\omega \sqrt{1 - \omega^2}}{2\omega^2 - 1} \right)\), we have \(\cos(\frac{\nabla f_{\gamma}(x)}{\|\nabla f_{\gamma}(x)\|}+\alpha v_{\perp},v)>\omega\).
\end{proposition}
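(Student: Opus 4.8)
The plan is to reduce the statement to an elementary one–variable inequality by working in the two–dimensional plane spanned by the normalized gradient $g := \nabla f_\gamma(x)/\|\nabla f_\gamma(x)\|$ and the eigenvector $v$. First I would decompose $v = v_\parallel + v_\perp$ with $v_\parallel = (g^\top v)\,g = \omega g$ and $v_\perp \perp g$, so that $\|v_\perp\| = \sqrt{1-\omega^2}$ (using $\|v\| = 1$ and, without loss of generality after the sign correction in Eq.~\ref{perturbation}, $\omega > 0$). Writing the perturbed direction as $w := g + \alpha v_\perp$, the two quantities I need are immediate inner products: $w^\top v = \omega + \alpha\|v_\perp\|^2$ and $\|w\|^2 = 1 + \alpha^2\|v_\perp\|^2$, since every cross term involving $g^\top v_\perp$ vanishes.

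Then $\cos(w,v) = (w^\top v)/\|w\|$ because $\|v\| = 1$, and the claim $\cos(w,v) > \omega$ becomes $\omega + \alpha\|v_\perp\|^2 > \omega\sqrt{1 + \alpha^2\|v_\perp\|^2}$. Since $\omega > 0$ and $\alpha > 0$, the left-hand side is positive, so I can square both sides (monotone on the nonnegatives) without changing the solution set; cancelling $\omega^2$ and dividing by the positive factor $\alpha\|v_\perp\|^2$ collapses this to the linear inequality $2\omega + \alpha\|v_\perp\|^2 > \omega^2\alpha$, i.e. $2\omega > \alpha\bigl(\omega^2 - \|v_\perp\|^2\bigr) = \alpha(2\omega^2 - 1)$. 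When $\omega \le \tfrac{\sqrt{2}}{2}$ the right-hand side is nonpositive and the inequality holds for every $\alpha > 0$, which recovers the remark preceding the proposition; when $\omega > \tfrac{\sqrt{2}}{2}$ it is equivalent to an upper bound on $\alpha$ that, once the normalization of $v_\perp$ is tracked consistently, is exactly the stated interval $\bigl(0,\ \tfrac{2\omega\sqrt{1-\omega^2}}{2\omega^2-1}\bigr)$.

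I do not anticipate a genuine obstacle: this is a planar computation. The only points needing care are (i) verifying positivity of $\omega + \alpha\|v_\perp\|^2$ before squaring, so that the squared inequality is equivalent rather than merely necessary; (ii) keeping the normalization of $v_\perp$ fixed throughout, since that is what pins down the constant in the endpoint; and (iii) the sign convention — adding a positive multiple of $v_\perp$ (the genuine component of $v$ along the perpendicular subspace) moves $g$ toward $v$, which is what the factor $\mathrm{sign}(\langle \nabla f_\gamma, \hat v\rangle)$ in Eq.~\ref{perturbation} guarantees. As a consistency check I would confirm the limiting behaviour: $\cos(w,v)$ equals $\omega$ at $\alpha = 0$, has positive derivative there, and tends to $\sqrt{1-\omega^2} < \omega$ as $\alpha \to \infty$ when $\omega > \tfrac{\sqrt{2}}{2}$, so $\{\alpha > 0 : \cos(w,v) > \omega\}$ is indeed an interval of the claimed form.
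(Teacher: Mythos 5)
Your planar computation is precisely the argument the paper leaves implicit (it only says ``using fundamental mathematics''), and it is correct: writing $w=g+\alpha v_\perp$, noting the left-hand side $w^\top v$ is positive, and squaring reduces the claim to a linear inequality in $\alpha$. The only loose end is the normalization issue you flag but do not actually resolve: with your convention $\|v_\perp\|=\sqrt{1-\omega^2}$ the linear condition is $2\omega>\alpha(2\omega^2-1)$, i.e.\ $\alpha<\frac{2\omega}{2\omega^2-1}$, whereas the stated endpoint $\frac{2\omega\sqrt{1-\omega^2}}{2\omega^2-1}$ (and the paper's numerical examples, e.g.\ $3.43$ at $\omega=0.8$) corresponds to taking $v_\perp$ as the \emph{unit} vector in the perpendicular direction, which gives $w^\top v=\omega+\alpha\sqrt{1-\omega^2}$ and $\|w\|^2=1+\alpha^2$ and lands exactly on the claimed interval. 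Since $\frac{2\omega\sqrt{1-\omega^2}}{2\omega^2-1}<\frac{2\omega}{2\omega^2-1}$, the proposition holds under either convention, so your proof is sound; just state explicitly which normalization of $v_\perp$ you use when matching the endpoint rather than deferring it to ``tracked consistently.''
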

Proposition \ref{Proposition} shows our update can indeed improve the alignment for a wide range of \(\alpha\). For example, if \(\cos( \nabla f_{\gamma}(x),v)=0.8\), then \(\alpha\) can be any value in \((0,3.43)\); if \(\cos( \nabla f_{\gamma}(x),v)=0.9\), then \(\alpha\) can be any value in \((0,1.27)\).

Next, inspired by \cite{si2024practical}, we have the following convergence rate for stochastic Eigen-SAM on non-convex function:
\begin{theorem} \label{thm:convergence}
    {\rm (Convergence rate)} Consider a \(\beta\)-smooth function f satisfying \(f^*=\inf_x f(x)>-\infty\), let \(\Delta:=f(x_0)-f^*\), and assume the mini-batch variance is bounded by \(\sigma^2\). Under Eigen-SAM, starting at \(x_0\) with any perturbation size \(\rho>0\) and step size \(\eta=\min\{\frac{1}{2\beta}, \frac{\sqrt{\Delta}}{\sqrt{\beta \sigma^2 T}}\}\) to minimize \(f\), we have
    \[\frac{1}{T} \sum_{t=0}^{T-1} \mathbb{E}\left\|\nabla f_{\gamma}\left(x_t\right)\right\|^2 \leq \mathcal{O}\left(\frac{\beta \Delta}{T}+\frac{\sqrt{\beta \sigma^2 \Delta}}{\sqrt{T}}\right)+\beta^2 (\rho^2+\alpha^2)\]
\end{theorem}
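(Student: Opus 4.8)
The plan is to run the standard descent‑lemma argument for stochastic non‑convex optimization, treating the Eigen‑SAM ascent step as a \emph{bounded} perturbation of the current iterate. The crucial preliminary observation—and the only place where Eigen‑SAM differs from plain SAM in the analysis—is that the perturbation $\widetilde{x}_k-x_k=\rho\,\epsilon_k$ has norm controlled by $\rho$ and $\alpha$ alone: since $\hat v_\perp$ is by construction orthogonal to $\nabla f_{\gamma_k}(x_k)$ and $\|\hat v_\perp\|\le\|\hat v\|=1$, the two summands in $\epsilon_k$ are orthogonal, so $\|\rho\,\epsilon_k\|^2=\rho^2+\rho^2\alpha^2\|\hat v_\perp\|^2\le\rho^2(1+\alpha^2)$. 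Note this bound is oblivious to how stale or accurate the power‑iteration estimate $\hat v$ is, so the intermittent estimation scheme (the $p$, $q$ parameters) plays no role whatsoever in the rate.

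Next I would apply $\beta$‑smoothness to the update $x_{k+1}=x_k-\eta\,\nabla f_{\gamma_k}(\widetilde{x}_k)$, obtaining
\[
f(x_{k+1})\le f(x_k)-\eta\,\langle\nabla f(x_k),\nabla f_{\gamma_k}(\widetilde{x}_k)\rangle+\tfrac{\beta\eta^2}{2}\|\nabla f_{\gamma_k}(\widetilde{x}_k)\|^2,
\]
and then take the conditional expectation $\mathbb{E}_k[\cdot]$ over $\gamma_k$ given the history. \emph{The main obstacle} is that $\nabla f_{\gamma_k}(\widetilde{x}_k)$ is an unbiased estimate of neither $\nabla f(x_k)$ nor $\nabla f(\widetilde{x}_k)$, because $\widetilde{x}_k$ itself depends on $\gamma_k$. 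I would resolve this the way the SAM convergence literature (e.g.\ \cite{si2024practical}) does: write $\nabla f_{\gamma_k}(\widetilde{x}_k)=\nabla f_{\gamma_k}(x_k)+\big(\nabla f_{\gamma_k}(\widetilde{x}_k)-\nabla f_{\gamma_k}(x_k)\big)$, where the bracketed ``perturbation error'' is \emph{deterministically} at most $\beta\|\widetilde{x}_k-x_k\|\le\beta\rho\sqrt{1+\alpha^2}$ by smoothness. Since $\nabla f_{\gamma_k}(x_k)$ is unbiased for $\nabla f(x_k)$ with variance at most $\sigma^2$, this decoupling gives $\mathbb{E}_k\langle\nabla f(x_k),\nabla f_{\gamma_k}(\widetilde{x}_k)\rangle\ge\|\nabla f(x_k)\|^2-\beta\rho\sqrt{1+\alpha^2}\,\|\nabla f(x_k)\|$ and $\mathbb{E}_k\|\nabla f_{\gamma_k}(\widetilde{x}_k)\|^2\le 2\|\nabla f(x_k)\|^2+2\sigma^2+2\beta^2\rho^2(1+\alpha^2)$.

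Finally I would combine these bounds, invoke the step‑size constraint $\eta\le\tfrac{1}{2\beta}$ to fold the $\beta\eta^2\|\nabla f(x_k)\|^2$ contribution into $\tfrac{\eta}{2}\|\nabla f(x_k)\|^2$ (and likewise the $\beta^3\eta^2$ term into an $\mathcal{O}(\eta\beta^2)$ term), and apply Young's inequality to the cross term, $\eta\beta\rho\sqrt{1+\alpha^2}\|\nabla f(x_k)\|\le\tfrac{\eta}{4}\|\nabla f(x_k)\|^2+\eta\beta^2\rho^2(1+\alpha^2)$. This yields a one‑step recursion
\[
\mathbb{E}_k[f(x_{k+1})]\le f(x_k)-\tfrac{\eta}{4}\|\nabla f(x_k)\|^2+C_1\,\eta\beta^2\rho^2(1+\alpha^2)+C_2\,\beta\eta^2\sigma^2 .
\]
Telescoping over $k=0,\dots,T-1$, taking total expectation, using $f(x_0)-\mathbb{E}f(x_T)\le\Delta$, and dividing by $\eta T/4$ gives $\tfrac1T\sum_t\mathbb{E}\|\nabla f(x_t)\|^2\le\tfrac{4\Delta}{\eta T}+4C_1\beta^2\rho^2(1+\alpha^2)+4C_2\beta\eta\sigma^2$. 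Plugging in $\eta=\min\{\tfrac{1}{2\beta},\sqrt{\Delta/(\beta\sigma^2T)}\}$—so that $1/\eta\le2\beta+\sqrt{\beta\sigma^2T/\Delta}$ and $\beta\eta\sigma^2\le\sqrt{\beta\sigma^2\Delta/T}$—turns the first and third terms into $\mathcal{O}\!\big(\beta\Delta/T+\sqrt{\beta\sigma^2\Delta}/\sqrt T\big)$, and for $\rho\le1$ the middle term is at most $4C_1\beta^2(\rho^2+\alpha^2)$, which is the claimed bound. The only remaining bookkeeping is to pass from $\mathbb{E}\|\nabla f(x_t)\|^2$ to the stochastic norm $\mathbb{E}\|\nabla f_{\gamma}(x_t)\|^2$ via the variance bound $\mathbb{E}\|\nabla f_{\gamma}(x_t)\|^2\le\mathbb{E}\|\nabla f(x_t)\|^2+\sigma^2$, which leaves the order of the statement unchanged.
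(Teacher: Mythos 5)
Your proposal follows essentially the same route as the paper's proof: a descent-lemma argument in which the Eigen-SAM perturbation is handled as a deterministically bounded displacement, $\|\rho\epsilon_t\|^2\le\rho^2(1+\alpha^2)$, smoothness converts that displacement into a bias of size $\beta\rho\sqrt{1+\alpha^2}$, and the step size $\eta=\min\{\tfrac{1}{2\beta},\sqrt{\Delta/(\beta\sigma^2T)}\}$ turns the telescoped recursion into the stated rate. The only substantive difference is how the inner product $\mathbb{E}\langle\nabla f(x_t),\nabla f_\gamma(\tilde{x}_t)\rangle$ is treated: the paper uses the exact identity $\langle a,b\rangle=\tfrac{1}{2}\big(\|a\|^2+\|b\|^2-\|a-b\|^2\big)$, so the $-\tfrac{\eta}{2}\mathbb{E}\|\nabla f(\tilde{x}_t)\|^2$ term cancels the $\beta\eta^2\mathbb{E}\|\nabla f(\tilde{x}_t)\|^2$ term (using $\eta\le\tfrac{1}{2\beta}$) and the perturbation enters only once, yielding the clean coefficient $\beta^2(\rho^2+\alpha^2)$ after dividing by $\eta T/2$; you instead decompose the perturbed gradient and apply Cauchy--Schwarz plus Young's inequality, which is equally valid but inflates the constant in front of $\beta^2(\rho^2+\alpha^2)$ (roughly a factor of $6$ with your choices). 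Since the theorem places that term outside the $\mathcal{O}(\cdot)$ with unit coefficient, your route strictly proves a slightly weaker constant there; this is cosmetic, not a gap. Both you and the paper also silently pass from $\rho^2(1+\alpha^2)$ to $\rho^2+\alpha^2$ (needing $\rho\le 1$, which you note), and both implicitly need $\beta$-smoothness of the mini-batch losses $f_\gamma$, not only of $f$, to bound the perturbation error.

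One claim you should retract is the final bookkeeping step: passing from $\mathbb{E}\|\nabla f(x_t)\|^2$ to $\mathbb{E}\|\nabla f_\gamma(x_t)\|^2$ via $\mathbb{E}\|\nabla f_\gamma(x_t)\|^2\le\mathbb{E}\|\nabla f(x_t)\|^2+\sigma^2$ does \emph{not} leave the order of the statement unchanged. The extra $\sigma^2$ is a constant that neither decays with $T$ nor is dominated by $\beta^2(\rho^2+\alpha^2)$, so the stated right-hand side cannot absorb it. The paper's own proof only establishes the bound for the averaged full-gradient norm $\tfrac{1}{T}\sum_t\mathbb{E}\|\nabla f(x_t)\|^2$; the $\nabla f_\gamma$ on the left-hand side of the theorem is notational looseness (the proof conflates $f$ and $f_\gamma$ throughout). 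You should stop at the full-gradient bound rather than assert that the conversion to the stochastic gradient is free.
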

\begin{proof}
    By the definition of \(\beta\)-smoothness, we have
    \begin{align*}
\mathbb{E}f(x_{t+1}) &\leq \mathbb{E}f(x_t) - \eta \mathbb{E} \langle \nabla f(x_t), \nabla f(x_t+\rho \epsilon) \rangle + \frac{\beta \eta^2}{2} \mathbb{E} \| \nabla f(x_t+\rho \epsilon) \|^2 \\
&\leq \mathbb{E}f(x_t) - \eta \mathbb{E} \langle \nabla f(x_t), \nabla f(x_t+\rho \epsilon) \rangle \\
&\quad + \beta \eta^2 \left( \mathbb{E} \| \nabla f(x_t+\rho \epsilon) \|^2 + \mathbb{E} \| \nabla f(x_t+\rho \epsilon) - \nabla f(x_t) \|^2 \right) \\
&\leq \mathbb{E}f(x_t) - \eta \mathbb{E} \langle \nabla f(x_t), \nabla f(x_t+\rho \epsilon) \rangle + \beta \eta^2 \left( \mathbb{E} \| \nabla f(x_t+\rho \epsilon) \|^2 + \sigma^2 \right) \\
&= \mathbb{E}f(x_t) - \frac{\eta}{2} \mathbb{E} \| \nabla f(x_t) \|^2 - \frac{\eta}{2} \mathbb{E} \| \nabla f(x_t+\rho \epsilon) \|^2 \\
&\quad + \frac{\eta}{2} \mathbb{E} \| \nabla f(x_t) - \nabla f(x_t+\rho \epsilon) \|^2 + \beta \eta^2 \left( \mathbb{E} \| \nabla f(x_t+\rho \epsilon) \|^2 + \sigma^2 \right) \\
&\leq \mathbb{E}f(x_t) - \frac{\eta}{2} \mathbb{E} \| \nabla f(x_t) \|^2 + \frac{\beta^2 \eta}{2} \mathbb{E} \| x_t - (x_t+\rho \epsilon) \|^2 + \beta \sigma^2 \eta^2 \\
&= \mathbb{E}f(x_t) - \frac{\eta}{2} \mathbb{E} \| \nabla f(x_t) \|^2 + \frac{\beta^2 \eta (\rho^2+\alpha^2)}{2} + \beta \sigma^2 \eta^2.
\end{align*}
Rearranging this inequality and deviding both sides by \(\frac{\eta T}{2}\), we have
\begin{align*}
\frac{1}{T} \sum_{t=0}^{T-1} \mathbb{E} \|\nabla f(x_t)\|^2 &\leq \frac{2}{\eta T} \left( \mathbb{E}f(x_0) - \mathbb{E}f(x_T) \right) + \beta^2 (\rho^2+\alpha^2) + 2 \beta \sigma^2 \eta \\
&\leq \frac{2 \Delta}{\eta T} + \beta^2 (\rho^2+\alpha^2) + 2 \beta \sigma^2 \eta.
\end{align*}
By using \(\eta=\min\{\frac{1}{2\beta}, \frac{\sqrt{\Delta}}{\sqrt{\beta \sigma^2 T}}\}\), we complete the proof.
\end{proof}

\section{Additional Experimental Details}\label{experiment}
For hyperparameter \(\rho\), we follow the guidelines by \cite{Foret2021}, setting \(\rho\) to 0.05 for CIFAR-10 and Fashion-MNIST, 0.01 for SVHN, and 0.1 for CIFAR-100. We ensure that SAM and Eigen-SAM use the same \(\rho\) for a fair comparison. Additionally, we tune the hyperparameter \(\alpha\) for Eigen-SAM over \{0.05, 0.1, 0.2\} using \(10\%\) of the training set as a validation set. We find that $\alpha = 0.2$ works the best for almost all cases. Therefore, we report the performance with $\alpha = 0.2$ for all experiments to demonstrate that Eigen-SAM does not require extensive hyperparameter tuning. We run three independent repeat experiments with different weight initializations and data shuffling. Because SAM and Eigen-SAM require twice the runtime, we allow SGD to train for twice the number of epochs. All our experiments were conducted on NVIDIA RTX 4090 24GB GPUs. 
 
\section{Additional experiment results}\label{rb}
\begin{table}[h]
\centering
\caption{Test accuracy on CIFAR-10 for different values of p (interval steps for estimating eigenvectors) using Eigen-SAM.}\label{rb:1}
\fontsize{10}{12}
\begin{tabular}{llcccc}
Architechture      & p=100 & p=200 & p=500 & p=1000 &SAM\\ \midrule \midrule

ResNet18     & $95.9_{\pm 0.2}$ & $95.9_{\pm 0.1}$ & $95.7_{\pm 0.2}$& $95.8_{\pm 0.1}$  &$95.5_{\pm 0.1}$ \\
             \midrule
WideResNet-28-10    &  $96.8_{\pm 0.1}$  & $96.7_{\pm 0.1}$ & $96.8_{\pm 0.1}$ & $96.7_{\pm 0.1}$&$96.5_{\pm 0.1}$   \\
             \bottomrule
\end{tabular}
\end{table}

\begin{table}[h]
\centering
\caption{Test accuracy on CIFAR-100 for different values of p (interval steps for estimating eigenvectors) using Eigen-SAM.}\label{rb:2}
\fontsize{10}{12}
\begin{tabular}{llcccc}
Architechture      & p=100 & p=200 & p=500 & p=1000 &SAM\\ \midrule \midrule

ResNet18     & $78.3_{\pm 0.2}$ & $78.2_{\pm 0.2}$ & $78.2_{\pm 0.2}$& $78.3_{\pm 0.1}$  &$77.4_{\pm 0.2}$ \\
             \midrule
WideResNet-28-10    &  $82.8_{\pm 0.1}$  & $82.7_{\pm 0.2}$ & $82.7_{\pm 0.1}$ & $82.6_{\pm 0.1}$&$82.0_{\pm 0.2}$   \\
             \bottomrule
\end{tabular}
\end{table}

\begin{figure}[H]
     \centering
     \begin{subfigure}[b]{0.4\textwidth}
         \centering
         \includegraphics[width=\textwidth]{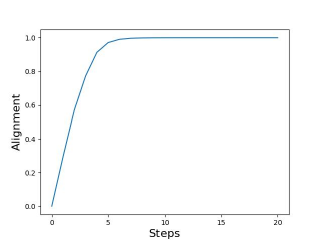}
         \caption{ResNet18}
     \end{subfigure}
     \hfill
     \begin{subfigure}[b]{0.4\textwidth}
         \centering
         \includegraphics[width=\textwidth]{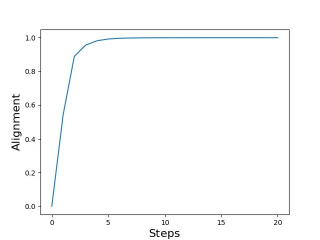}
         \caption{WideResNet-28-10}
     \end{subfigure}
        \caption{The effect of the number of Hessian-vector product steps in Algorithm \ref{algorithm:1} (power iteration) on the alignment of the estimated vector with the top eigenvalue. The dataset is CIFAR-100, and the models are ResNet18 and WideResNet-28-10 at mid-training stage (100th epoch).}\label{rb:3}
        \vspace{-4mm}
\end{figure}

\section{Limitaions}
A limitation of this work is the additional memory and time required to estimate the top eigenvalue of the Hessian matrix. Improving the efficiency of Eigen-SAM is a direction for future research.

\end{document}